\newcommand{\R}{\mathbb{R}}
\newcommand{\N}{\mathbb{N}}
\newcommand{\E}{\mathbb{E}}
\newcommand{\sX}{\mathcal{X}}
\newcommand{\KL}[2]{\mathrm{KL}({#1}\|{#2})}
\newcommand{\bx}{\mathbf{x}}
\newcommand{\bz}{\mathbf{z}}
\newcommand{\bs}{\mathbf{s}}
\newcommand{\ba}{\mathbf{a}}
\def \rmd {\mathrm{d}}
\newcommand{\pd}{\pi} 
\newcommand{\pth}{p^{\theta}} 
\newcommand{\nuth}{\nu^{\theta}} 
\newcommand{\pb}{p_{\textrm{alg}}} 
\newcommand{\Unif}{\mathrm{Unif}}
\newcommand{\Geom}{\mathrm{Geom}}
\newcommand{\bigO}{\mathcal{O}}
\newcommand{\smallo}{o}
\newtheorem{theorem}{Theorem}
\newtheorem{lemma}[theorem]{Lemma}
\newtheorem{proposition}[theorem]{Proposition}
\theoremstyle{definition}
\newtheorem{remark}[theorem]{Remark}
\newtheorem{assumption}{Assumption}
\renewenvironment{proof}[1][\proofname] {\par\pushQED{\qed}\normalfont\topsep6\p@\@plus6\p@\relax\trivlist\item[\hskip\labelsep\bfseries#1\@addpunct{.}]\ignorespaces}{\popQED\endtrivlist\@endpefalse}
\begin{document}

\title{Error Bounds and Optimal Schedules for \\ Masked Diffusions with Factorized Approximations}
\author{Hugo Lavenant\thanks{Bocconi University, Department of Decision Sciences and BIDSA, Milan, Italy (hugo.lavenant@unibocconi.it)} \, and Giacomo Zanella\thanks{Bocconi University, Department of Decision Sciences and BIDSA, Milan, Italy (giacomo.zanella@unibocconi.it)}}
\date{\today}

\maketitle

\begin{abstract}
Recently proposed generative models for discrete data, such as Masked Diffusion Models (MDMs), exploit conditional independence approximations to reduce the computational cost of popular Auto-Regressive Models (ARMs), at the price of some bias in the sampling distribution. 
We study the resulting computation-vs-accuracy trade-off, 
providing general error bounds (in relative entropy) that depend only on the average number of tokens generated per iteration and are independent of the data dimensionality (i.e.\ sequence length), thus supporting the empirical success of MDMs. We then investigate the gain obtained by using non-constant schedule sizes (i.e.\ varying the number of unmasked tokens during the generation process) and identify the optimal schedule as a function of a so-called information profile of the data distribution, thus allowing for a principled optimization of schedule sizes.
We define methods directly as sampling algorithms and do not use classical derivations as time-reversed diffusion processes, leading us to simple and transparent proofs.
\end{abstract}

\section{Set-up, background and objectives}
Assume we are interested in generating samples from a probability distribution $\pd$ on a product space $\sX^N$, where $\sX$ can be a finite set of tokens or some other general state space.
A standard approach to generate a sample $\bx = (x_1, \ldots, x_N)$ is to proceed sequentially, sampling first $x_1$ from its marginal distribution $\pi(x_1)$ and then $x_i$ from its conditional distribution $\pi(x_i|\bx_{<i})$, for $i=2,\dots,N$, where $\bx_{<i}=(x_1,\dots,x_{i-1})$.
One limitation of this approach, underlying popular auto-regressive generative models (ARMs), is the need to perform $N$ sequential steps, which limits the speed and computational efficiency of the resulting algorithms.
This has motivated the exploration of alternative procedures to generate samples from $\pi$ given access to its conditional distributions (or approximations thereof), such as masked diffusion models (MDMs) \citep{austin2021structured,campbell2022continuous,shi2024simplified,sahoo2024simple}.
Despite being originally derived by analogy with diffusion models on continuous state spaces, MDMs can also be understood as a way to reduce the cost of standard ARMs by generating multiple tokens simultaneously, see e.g.\ \cite[Sec. 2.1.2]{kim2025train} and references therein for more discussion.
In this work, we follow this more algorithmic perspective, directly defining a general class of `unmasking' (or `sequential sampling') algorithms and analysing their properties and the resulting computational-vs-accuracy trade-off.

\paragraph{General unmasking algorithms} 
We consider sampling algorithms of the form described in Algorithm~\ref{alg:gen}. 
At iteration $k$, the algorithm decides which new components $z_k\subseteq \{1,\dots,N\}\backslash \bz_{<k}$ to generate, where $\bz_{<k}=\cup_{j=1}^{k-1}z_j\subseteq 
\{1, \ldots, N \}$ denotes the components of $\bx$ that have already been generated before iteration $k$, then samples $\bx_{z_{k}}=(x_i)_{i\in z_{k}}$ from some probability distribution $\pth(\bx_{z_{k}}; \bx_{\bz_{<k}})$ over $\sX^{|z_k|}$, and finally updates $\bz_{\leq k} = z_k \cup \bz_{< k}$.
Here $\pth(\bx_{z_{k}}; \bx_{\bz_{<k}})$ is some parametric approximation of $\pi(\bx_{z_{k}}|\bx_{\bz_{<k}})$, 
the conditional distribution of $\bx_{z_{k}}$ given $\bx_{\bz_{<k}}=(x_i)_{i\in\bz_{<k}}$ under $\pi$. 
The set $z_{k}$ is sampled from some probability distribution  $\nu^\theta(z_{k} ; \bz_{<k}, \bx_{\bz_{< k}})$ over subsets of $ \{1,\dots,N\}\backslash \bz_{<k}$, which can in principle depend on both $\bz_{<k}$ and $\bx_{\bz_{< k}}$. 
The algorithm continues until $\bz_{\leq k}$ coincides with the whole set $\{ 1, \ldots, N \}$ and the number of iterations required to terminate is denoted as $K=\inf\{k\,:\,\bz_{\leq k}=\{ 1, \ldots, N \}\}$, which is in general a random quantity. 
One often assumes $|z_k|\geq 1$ for all $k$, so that $K\leq N$. The ARM case corresponds to $z_k = \{ k \}$, with $K = N$ and $\bz_{\leq k} = \{1, \ldots, k \}$. 
Motivated by MDMs, components already sampled are referred to as `unmasked', whereas those yet to be sampled are the `masked' ones.

\begin{algorithm}[H]
\caption{Sequential sampling/unmasking}
\label{alg:gen}
\begin{algorithmic}
\Repeat{~for $k=1,2,\ldots$}
\State Sample $z_k\sim \nuth(z_k;\bz_{<k},\bx_{\bz_{<k}})$ subset of $\{1,\dots,N\}\backslash \bz_{<k}$ \hfill \emph{(Choose coordinates to unmask)}
\State Sample $\bx_{ z_k}\sim \pth(\bx_{ z_k};\bx_{\bz_{<k}})$ in $\sX^{|z_k|}$ \hfill \emph{(Generate tokens)} 
\Until{$\bz_{\leq k} = \{ 1, \ldots, N \}$.}
\State Set $K=\inf\{k\,:\,\bz_{\leq k} = \{ 1, \ldots, N \}\}$.
\State \Return $\bx = (x_1, \ldots, x_N) \in \sX^N$ and $(z_1, \ldots, z_K)$ an ordered partition of $\{ 1, \ldots, N \}$.
\end{algorithmic}
\end{algorithm}


Upon termination, the algorithm produces a sample $\bx$ in $\sX^N$ and an ordered partition $\bz=(z_1, \ldots, z_K)$ of $\{ 1, \ldots, N \}$.
By construction, their joint distribution reads
\begin{equation}\label{eq:dec_pb_prod}
\pb(\bx,\bz) = \prod_{k=1}^K \nuth(z_k;\bz_{<k},\bx_{\bz_{<k}}) \pth(\bx_{z_k};\bx_{\bz_{<k}})
=
\pth(\bx;\bz)\nuth(\bz;\bx),    
\end{equation}
where 
$\pth(\bx;\bz)  =\prod_{k=1}^K \pth(\bx_{z_k};\bx_{\bz_{<k}})$, 
$\nuth(\bz;\bx)=\prod_{k=1}^K \nuth(z_k;\bz_{<k},\bx_{\bz_{<k}})$ and for $k=1$ we use the notation $\bz_{<k}=\emptyset$ and $\pth(\bx_{z_k};\bx_{\bz_{<k}})=\pth(\bx_{z_1})$.

The aim of the algorithm is to produce high quality samples from $\pi$ in $K< N$ steps. 
This encompasses two objectives: the first is to make the distribution of the output $\bx$ of the algorithm as close as possible to $\pi$, that is to achieve 
$$
\pb(\bx)=\sum_{\bz}\pb(\bx,\bz)\approx \pd(\bx)\,.
$$
The second is to reduce the computational cost required to generate samples which, as detailed later, is proportional to $K$.
These two objectives are in competition with each other and result in a trade-off between sampling accuracy and computational cost.

\paragraph{Factorized approximations and sources of error}
If $K<N$ then multiple tokens need to be generated simultaneously, said differently the size of $z_k$ can be strictly greater than $1$. Since $|\sX|$ is potentially large, learning multivariate distributions over $|\sX|^s$ with $s>1$ is often unfeasible. 
Thus, one typically resorts to factorized approximations defined as
\begin{align}\label{eq:fact_ass}
\pth(\bx_{z_k};\bx_{\bz_{<k}})=\prod_{i\in z_k}\pth(x_i;\bx_{\bz_{<k}}).
\end{align}
Equivalently, tokens in $\bx_{z_k}$ are sampled as if they were independent conditionally to $\bx_{\bz_{<k}}$.
This way, at the price of an additional approximation, one only needs to learn univariate distributions given an arbitrary conditioning set.

We thus have two sources of error that make $\pb(\bx)$ different from $\pd(\bx)$:
\begin{enumerate}
\item (Learning conditionals)
We do not know the true conditionals of $\pd$ but rather learn them, leading to the approximation
$$
\pd(x_i|\bx_{\bz_{<k}})
\approx 
\pth(x_i;\bx_{\bz_{<k}})\,.
$$
\item (Factorized assumption) We generate multiple tokens simultaneously pretending they were independent, leading to the approximation
$$
\pd(\bx_{z_k}|\bx_{\bz_{<k}})
\approx \prod_{i\in z_k}
\pd(x_i|\bx_{\bz_{<k}})\,.
$$
\end{enumerate}
The first error is the classical one, also incurred by standard ARMs, related to learning conditional distributions of $\pd$ from samples (i.e.\ from a training data set).
The second one relates to the computation-vs-accuracy trade-off in sampling from a string of $N$ tokens in $K<N$ rounds. 
This intuition is formalized below in Proposition~\ref{prop:error}, where the sampling error in relative entropy is explicitly decomposed into a learning error $E_\text{learn}$ and a factorization error $E_\text{fact}$.

\paragraph{Computational cost and arbitrary planners}

In the generative models literature, $\nuth$  and $\pth$ are referred to as, respectively, \emph{planner} and \emph{denoiser}, see e.g. \citep{peng2025path,kim2025train} and references therein.

The denoiser $\pth$ is usually trained by minimizing a cross-entropy loss of the form
\begin{equation}\label{eq:loss}
\sum_{(i,z)} \omega(i,z) \E_{\pd(\bx)}\left[ 
\log\frac{\pd(x_i|\bx_{z})}{ \pth(x_i;\bx_{z})}
\right], 
\end{equation}
where the sum runs over $z$ subset of $\{ 1, \ldots, N \}$ and $i \notin z$, with weights $\omega(i,z)$ which typically are chosen to be uniform; see e.g.\ \cite[Sec. 3]{uria2014deep} and \cite{shi2024simplified}. By construction, this loss is minimized by the exact conditionals of $\pi$. 
Models of $\pth$ used in practice receive $\bx_{\bz_{<k}}$ as input and produce all univariate conditional distributions $\{\pth(x_i;\bx_{\bz_{<k}})\}_{i\notin \bz_{<k},x_i\in\sX}$ as output. 
Once the model $\pth$ has been evaluated, the cost of sampling from the $|z_k|$ univariate conditionals in \eqref{eq:fact_ass} is comparatively small.
As a result, the dominant cost in Algorithm \ref{alg:gen} is given by the $K$ evaluations of the model $\pth$, which is why we measure the computational cost of Algorithm \ref{alg:gen} with $K$.
See e.g.\ the discussion about sampling efficiency of MDMs in \citet{ben2025accelerated}, where $K$ is referred to as \emph{number of function evaluations}.

The planner $\nuth$ is a design choice that can be freely optimized to improve accuracy, i.e.\ make $\pb(\bx)$ as close as possible to $\pd(\bx)$.
It is worth noting that in principle $\nuth$ can be any selection rule, where $z_k$ can depend on both $\bz_{<k}$ and $\bx_{\bz_{<k}}$, without affecting the validity of the sampling algorithm, in the sense that with a perfect denoiser any planner would produce perfect samples. The proof of the following elementary result can be found in the appendix.

\begin{lemma}\label{lemma:arbitrary_schedule}
If $\pth(\bx_z;\bx_{z'})=\pd(\bx_{z}|\bx_{z'})$ for any $z, z'$ disjoint subsets of $\{ 1, \ldots, N \}$, then $\pb(\bx)=\pd(\bx)$, regardless of the choice of $\nuth$.
\end{lemma}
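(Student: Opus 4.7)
The plan is to combine the joint decomposition in \eqref{eq:dec_pb_prod} with the chain rule of probability. Under the hypothesis of the lemma, each factor $\pth(\bx_{z_k};\bx_{\bz_{<k}})$ appearing in \eqref{eq:dec_pb_prod} coincides with the true conditional $\pd(\bx_{z_k}\mid \bx_{\bz_{<k}})$, because $z_k$ and $\bz_{<k}$ are disjoint subsets of $\{1,\ldots,N\}$ by construction of Algorithm~\ref{alg:gen}.

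Plugging this in, the joint distribution of the output becomes
\[
\pb(\bx,\bz) \;=\; \nuth(\bz;\bx)\,\prod_{k=1}^{K} \pd(\bx_{z_k}\mid \bx_{\bz_{<k}}).
\]
Since upon termination $(z_1,\ldots,z_K)$ is an ordered partition of $\{1,\ldots,N\}$, we have $\bz_{<k}\cup z_k = \bz_{\leq k}$ and $\bz_{\leq K}=\{1,\ldots,N\}$. Thus the telescoping chain rule gives
\[
\prod_{k=1}^{K} \pd(\bx_{z_k}\mid \bx_{\bz_{<k}}) \;=\; \pd(\bx),
\]
so $\pb(\bx,\bz)=\pd(\bx)\,\nuth(\bz;\bx)$.

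It then remains only to marginalize $\bz$ out. For any fixed $\bx$, the quantity $\nuth(\bz;\bx) = \prod_{k=1}^K \nuth(z_k;\bz_{<k},\bx_{\bz_{<k}})$ defines a probability distribution over ordered partitions $\bz$ of $\{1,\ldots,N\}$, as can be verified by induction on $k$ (at each step $\nuth(\,\cdot\,;\bz_{<k},\bx_{\bz_{<k}})$ is a probability over subsets of the complement of $\bz_{<k}$). Hence $\sum_{\bz}\nuth(\bz;\bx)=1$ and
\[
\pb(\bx) \;=\; \sum_{\bz}\pb(\bx,\bz) \;=\; \pd(\bx)\sum_{\bz}\nuth(\bz;\bx) \;=\; \pd(\bx),
\]
independently of $\nuth$. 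There is no genuine obstacle here: the only point requiring care is checking that the product of conditionals indexed by a partition really telescopes into $\pd(\bx)$, which follows immediately from the fact that Algorithm~\ref{alg:gen} never revisits an already unmasked coordinate.
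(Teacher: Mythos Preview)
Your proof is correct and follows essentially the same approach as the paper's: replace each $\pth(\bx_{z_k};\bx_{\bz_{<k}})$ by the true conditional, telescope the product into $\pd(\bx)$, and marginalize using $\sum_{\bz}\nuth(\bz;\bx)=1$. You add a little more justification for the telescoping and for $\nuth(\bz;\bx)$ being a probability over ordered partitions, but the argument is the same.
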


\paragraph{Objective and Contributions} 
\emph{Our main goal is to analyze the factorization error $E_\text{fact}$ incurred in Algorithm~\ref{alg:gen} by the factorized assumption~\eqref{eq:fact_ass}, answering the questions: how does it scale with $N$ and $K$? And how can one choose the planner $\nuth$ to minimize it?}

We assume that the training of the denoiser has already taken place, thus treat $\pth$ as given and fixed, and focus on optimizing the planner $\nuth$. 
We concentrate in particular on the case where the schedule is chosen with a random order (see Section \ref{sec:random_order} for a precise definition), where we obtain an upper bound (Theorem~\ref{thm:E_fac_exch_bound}) which is a factor $K$ better than the worst case bound over all schedules and all distributions $\pd$ (Proposition~\ref{prop:E_fac_bound}). Our analysis leads to an elegant rewriting of the factorization error in terms of the information profile of $\pd$ (Lemma~\ref{lm:value_Efact_random}). 
For a given information profile, we look at the problem of finding the schedule sizes that minimize the factorization error: by a scaling limit we connect it to a classical problem of calculus of variations (Theorems~\ref{thm:scal_lim_diverge} and~\ref{thm:scal_lim_bounded}). This results opens the way for a data-driven selection of an optimal schedule (Equation~\ref{eq:data_driven_schedule}).

\paragraph{Concurrent work} Similar results to ours were derived independently in the recent paper \cite{chen2025optimal}, which appeared online shortly after the first version of this work. In particular, \cite{chen2025optimal} also relates the factorization error to the information profile of $\pi$ (see Result 1 and Theorem 1.4 therein) the same way our Lemma \ref{lm:value_Efact_random} does.

\section{Error decomposition}

\paragraph{Decomposition of KL error} 
The approximation error between $\pb$ and the target distribution $\pd$ is measured with the Kullback-Leibler divergence, defined as
\begin{equation*}
\KL{\pd(\bx)}{\pb(\bx)}
    =\E_{\pi(\bx)}\left[\log\left(\frac{\pi(\bx)}{\pb(\bx)}\right)\right].
\end{equation*}
We will use the monotonicity of the Kullback-Leibler divergence: the $\mathrm{KL}$ decreases if we marginalize, and it also decreases (in expected value) if we condition. Both properties come from the chain rule for $\mathrm{KL}$ \cite[Theorem 2.5.3]{CoverThomas2006}.

The following proposition decomposes the sampling error in terms of the error due to the approximation in learning the conditionals, which we denote as $E_\text{learn}$, and the one due to the factorized assumption, which we denote as $E_\text{fact}$.
Below we denote the conditional total correlation of $(x_i)_{i\in z_k}$ given $\bx_{\bz_{<k}}$ under $\pd$ as
\begin{align*}
\mathrm{TC}_{\pd}(z_k|\bx_{\bz_{<k}})
=
\KL{\pd(\bx_{z_k}|\bx_{\bz_{<k}})}{\otimes_{i\in z_k}
\pd(x_i|\bx_{\bz_{<k}})}
=
\E_{\pd(\bx_{z_k}|\bx_{\bz_{<k}})}\left[
\log \frac{\pd(\bx_{z_k}|\bx_{\bz_{<k}})}{\prod_{i\in z_k}
\pd(x_i|\bx_{\bz_{<k}})}
\right]\,,\label{eq:mutual_info}
\end{align*}
where $\otimes$ denotes the independent product of distributions.
It measures how correlated the components of $x_i$, $i \in z_k$ are, given $x_j$, $j \in \bz_{<k}$, see e.g. \cite[Sec. 4]{Austin2020Multivariate} and references therein. By construction, $\mathrm{TC}_\pd(z_k|\bx_{\bz_{<k}})$ is non-negative and it equals $0$ if and only if the $x_i$, $i \in z_k$ are independent given $\bx_{\bz_{< k}}$, in particular this is always the case if $z_k$ contains exactly one element.

Note that $\pd(\bx) \nuth(\bz;\bx)$ is a valid probability mass function 
and that, if $(\bx,\bz)$ is drawn according to it, then $\bx \sim \pd$ and $\bz|\bx\sim\nuth(\bz;\bx)$.

\begin{proposition}
\label{prop:error}
Let $\pb(\bx)$ be the distribution induced by Algorithm~\ref{alg:gen} with $\pth$ as in \eqref{eq:fact_ass}. Then
\begin{equation*}
    \KL{\pd(\bx)}{\pb(\bx)}
\leq
E_\text{learn}+E_\text{fact}
\end{equation*}
where
\begin{equation*}
E_\text{learn}=\E_{\pd(\bx)\nuth(\bz;\bx)}\left[
\sum_{k\geq 1}
\sum_{i\in z_k}
\log\frac{\pd(x_i|\bx_{\bz_{<k}})}{\pth(x_i;\bx_{\bz_{<k}})}
\right]
\,, \qquad
E_\text{fact}=\E_{\pd(\bx)\nuth(\bz;\bx)}\left[
\sum_{k\geq 1}
\mathrm{TC}_\pd(z_k|\bx_{\bz_{<k}})
\right]\,.
\end{equation*}
\end{proposition}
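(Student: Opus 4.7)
The plan is to use the monotonicity of $\KL{\cdot}{\cdot}$ under marginalization as the single soft inequality, and then obtain the rest by an exact algebraic decomposition.

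First, I note that $\pd(\bx)\nuth(\bz;\bx)$ and $\pb(\bx,\bz)=\pth(\bx;\bz)\nuth(\bz;\bx)$ are both probability distributions on pairs $(\bx,\bz)$, whose $\bx$-marginals are $\pd(\bx)$ and $\pb(\bx)$ respectively. By the chain rule for KL (\cite[Theorem 2.5.3]{CoverThomas2006}) and marginalization,
\begin{equation*}
\KL{\pd(\bx)}{\pb(\bx)} \;\leq\; \KL{\pd(\bx)\nuth(\bz;\bx)}{\pth(\bx;\bz)\nuth(\bz;\bx)}.
\end{equation*}
Since $\nuth(\bz;\bx)$ appears in numerator and denominator of the log-ratio, it cancels, leaving
\begin{equation*}
\KL{\pd(\bx)}{\pb(\bx)} \;\leq\; \E_{\pd(\bx)\nuth(\bz;\bx)}\!\left[\log\frac{\pd(\bx)}{\pth(\bx;\bz)}\right].
\end{equation*}

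Next I would expand the numerator via the chain rule for $\pd$ along the ordered partition $\bz=(z_1,\dots,z_K)$, which gives $\pd(\bx)=\prod_{k=1}^{K}\pd(\bx_{z_k}\mid\bx_{\bz_{<k}})$, and the denominator using the definition $\pth(\bx;\bz)=\prod_{k=1}^{K}\pth(\bx_{z_k};\bx_{\bz_{<k}})$ together with the factorized assumption \eqref{eq:fact_ass}. The log-ratio then splits into a sum over $k$. For each $k$, I multiply and divide by $\prod_{i\in z_k}\pd(x_i\mid \bx_{\bz_{<k}})$ to get
\begin{equation*}
\log\frac{\pd(\bx_{z_k}\mid\bx_{\bz_{<k}})}{\prod_{i\in z_k}\pth(x_i;\bx_{\bz_{<k}})}
= \log\frac{\pd(\bx_{z_k}\mid\bx_{\bz_{<k}})}{\prod_{i\in z_k}\pd(x_i\mid\bx_{\bz_{<k}})} + \sum_{i\in z_k}\log\frac{\pd(x_i\mid\bx_{\bz_{<k}})}{\pth(x_i;\bx_{\bz_{<k}})}.
\end{equation*}

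Taking expectation under $\pd(\bx)\nuth(\bz;\bx)$ and summing over $k$, the first term yields $\E_{\pd(\bx)\nuth(\bz;\bx)}[\sum_k \mathrm{TC}_\pd(z_k\mid\bz_{<k})]=E_\text{fact}$ by the very definition of conditional total correlation (its expectation against $\pd(\bx)$ alone already equals $\mathrm{TC}_\pd(z_k\mid\bz_{<k})$, and then one averages over $\bz$), while the second yields $E_\text{learn}$ by inspection. Combining the three displays concludes the proof.

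There is no real obstacle: the only non-trivial move is the initial use of monotonicity of KL, and everything else is bookkeeping. One small thing to be careful about is that the indices $k$ and sets $z_k$ are themselves random under $\nuth(\bz;\bx)$, so that rewriting $\pd(\bx)$ by the chain rule must be done \emph{after} conditioning on $\bz$; this is why it is cleaner to work throughout with the joint measure $\pd(\bx)\nuth(\bz;\bx)$ rather than trying to swap sums and expectations prematurely.
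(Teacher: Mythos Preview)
Your proposal is correct and follows essentially the same approach as the paper: apply monotonicity of KL to pass to the joint distribution on $(\bx,\bz)$, cancel the common factor $\nuth(\bz;\bx)$, then split $\log\frac{\pd(\bx)}{\pth(\bx;\bz)}$ by multiplying and dividing by $\prod_{i\in z_k}\pd(x_i\mid\bx_{\bz_{<k}})$. The paper's proof is line-for-line the same argument.
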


\begin{proof}
We use the monotonicity of $\mathrm{KL}$:
as $\pd(\bx) \nuth(\bz;\bx)$ has marginal distribution $\pd(\bx)$, 
\begin{equation*}
\KL{\pd(\bx)}{\pb(\bx)} \leq \KL{\pd(\bx) \nuth(\bz;\bx)}{\pb(\bx,\bz)} = \KL{\pd(\bx)\nuth(\bz;\bx)}{\pth(\bx;\bz) \nuth(\bz;\bx)}, 
\end{equation*}
where the second equality comes from~\eqref{eq:dec_pb_prod}. We expand the definition of the KL divergence:
\begin{equation*}
\KL{\pd(\bx)\nuth(\bz;\bx)}{\pth(\bx;\bz) \nuth(\bz;\bx)} = \E_{\pd(\bx)\nuth(\bz;\bx)}\left[\log \frac{\pd(\bx)}{\pth(\bx;\bz)}\right].
\end{equation*}
Following~\eqref{eq:fact_ass}, we can write 
$$
\log \frac{\pd(\bx)}{\pth(\bx;\bz)}
=
\sum_{k\geq 1}\left(
\log \frac{\pd(\bx_{z_k}|\bx_{\bz_{<k}})}{\prod_{i\in z_k}
\pd(x_i|\bx_{\bz_{<k}})}
+
\log\frac{\prod_{i\in z_k}\pd(x_i|\bx_{\bz_{<k}})}{\prod_{i\in z_k}\pth(x_i;\bx_{\bz_{<k}})}\right)
\,.
$$
Taking the expectation with respect to $\pd(\bx)\nuth(\bz;\bx)$ gives $\KL{\pd(\bx)\nuth(\bz;\bx)}{\pth(\bx;\bz) \nuth(\bz;\bx)}=E_\text{fact}+E_\text{learn}$ since the conditional distribution of $\bx_{z_k}$ given $\bz_{\leq k}$ and  $\bx_{\bz_{<k}}$ under $\pd(\bx)\nuth(\bz;\bx)$ coincides with $\pd(\bx_{z_k}|\bx_{\bz_{<k}})$.
\end{proof}

The term $E_\text{learn}$ measures the closeness of $\pth$ to $\pd$ and it is zero if $\pth(x_i;\bx_{z})=\pd(x_i|\bx_{z})$ for all $z$ subset of $\{ 1, \ldots, N \}$ and $i \notin z$. It is very close to the loss function minimized during training recalled in~\eqref{eq:loss}: the only difference between $E_\text{learn}$ and the learning loss in~\eqref{eq:loss} is the weights $\omega$.

On the contrary, the factorization error $E_\text{fact}$ is independent of $\pth$ and has a strong dependence on $K$. In particular, it is zero when $K=N$ (meaning $|z_k| = 1$ for all $k$) and it generally increases as $K$ decreases: this is consistent as it is related to the factorization approximation.
In the sequel we focus only on the factorization error, $E_\text{fact}$.

Error bounds involving conditional mutual informations already appeared in the literature on discrete diffusion models \citep{park2024optimizing}. In particular the recent works \citet{li2025convergence} and \citet{ben2025accelerated} exploit decompositions analogous to the one in Proposition \ref{prop:error}.

\section{Worst-case bounds on the factorization error}

We first concentrate on an arbitrary planner $\nuth$ and we explain how $E_\text{fact}$ may scale in this case. Inspired by the bounds in \citet{li2025convergence}, we consider the following measure of correlation for the distribution $\pd$:
\begin{equation*}
D(\pd)=\frac{1}{N}\sum_{i=1}^N
\E_{\pd(\bx)} \left[ \log \frac{\pd(\bx)}{\pd(x_i)\pd(\bx_{-i})} \right]
= \frac{1}{N} \sum_{i=1}^N \E_{\pd(\bx)} \left[ \log \frac{\pd(x_i|\bx_{-i})}{\pd(x_i)} \right] \,,
\end{equation*}
where $\bx_{-i}=(x_j)_{j\neq i}$. 
As noted in \cite[Lemma 4.3]{Austin2020Multivariate}, $N D(\pd)$ coincides with sum of the total correlation and dual total correlation of $\pi$.
As $\log \pd(x_i|\bx_{-i}) \leq 0$ and with the concavity of $\log$ it is straightforward to see that $D(\pd) \leq \log |\sX|$. 
We prove two upper bounds: one universal not depending on $\pd$, and one a bit finer depending both on $D(\pd)$ and the schedule. 

\begin{proposition}
\label{prop:E_fac_bound}
We have
\begin{align}\label{eq:E_fac_bound_gen}
E_\text{fact}
\leq 
(N-\E_{\pd(\bx)\nuth(\bz;\bx)}[K])\log |\sX|.
\end{align}
Moreover, if $\nuth(\bz)=\nuth(\bz;\bx)$ is independent of $\bx$, denoting $s_k=|z_k|$ and $s_{\max}=\max (s_1, \ldots, s_K)$, we have
\begin{align}\label{eq:E_fac_bound_gen_2}
E_\text{fact}\leq \left(N-\E_{\nuth(\bz)}\left[\frac{N}{s_{\max}}\right]\right) D(\pd)\,.
\end{align}
\end{proposition}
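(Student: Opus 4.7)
My plan is to prove both inequalities by controlling $\sum_{k}\mathrm{TC}_{\pd}(z_k|\bz_{<k})$ pointwise in $\bz=(z_1,\dots,z_K)$ and then taking expectation against $\pd(\bx)\nuth(\bz;\bx)$. The common starting point is the chain-rule decomposition of the conditional total correlation: for any enumeration $i_1,\dots,i_{s_k}$ of $z_k$ (with $s_k:=|z_k|$), expanding $H_\pd(\bx_{z_k}|\bx_{\bz_{<k}})$ along the enumeration yields
\begin{equation*}
\mathrm{TC}_\pd(z_k|\bz_{<k}) = \sum_{j=2}^{s_k} I_\pd(X_{i_j} ; X_{i_1},\dots,X_{i_{j-1}} \,|\, \bx_{\bz_{<k}}),
\end{equation*}
where $H_\pd$ and $I_\pd$ denote (conditional) entropy and mutual information under $\pd$.

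For the first, crude bound~\eqref{eq:E_fac_bound_gen}, each summand is trivially at most $H_\pd(X_{i_j}|\bx_{\bz_{<k}}) \leq \log|\sX|$, so $\mathrm{TC}_\pd(z_k|\bz_{<k}) \leq (s_k-1)\log|\sX|$. Summing over $k$ and using $\sum_k s_k = N$ (since $(z_k)_k$ partitions $\{1,\dots,N\}$) gives the pointwise bound $(N-K)\log|\sX|$, and taking expectation yields~\eqref{eq:E_fac_bound_gen}.

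For the sharper bound~\eqref{eq:E_fac_bound_gen_2}, the crucial step is the chain of inequalities
\begin{equation*}
I_\pd(X_{i_j};X_{i_1},\dots,X_{i_{j-1}}\,|\,\bx_{\bz_{<k}}) \leq I_\pd(X_{i_j};X_{i_1},\dots,X_{i_{j-1}},\bx_{\bz_{<k}}) \leq I_\pd(X_{i_j};\bx_{-i_j}),
\end{equation*}
where the first inequality comes from $I(X;B|A) = I(X;A,B) - I(X;A) \leq I(X;A,B)$, and the second from monotonicity of mutual information, using $\{i_1,\dots,i_{j-1}\} \cup \bz_{<k} \subseteq \{1,\dots,N\}\setminus\{i_j\}$. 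Since the enumeration is arbitrary, I would symmetrize by averaging the bound over all $s_k!$ orderings of $z_k$: each $i \in z_k$ occupies the position $j=1$ in exactly a $1/s_k$-fraction of orderings and is excluded from the sum there, leaving
\begin{equation*}
\mathrm{TC}_\pd(z_k|\bz_{<k}) \leq \frac{s_k-1}{s_k}\sum_{i\in z_k} I_\pd(X_i;\bx_{-i}).
\end{equation*}
Because $s_k \leq s_{\max}$, we have $(s_k-1)/s_k \leq 1 - 1/s_{\max}$; summing over $k$, invoking the partition property $\bigsqcup_k z_k = \{1,\dots,N\}$, and recognising $\sum_{i=1}^N I_\pd(X_i;\bx_{-i}) = N D(\pd)$, we obtain the pointwise bound $\sum_k \mathrm{TC}_\pd(z_k|\bz_{<k}) \leq (N - N/s_{\max}) D(\pd)$, after which taking expectation against $\pd(\bx)\nuth(\bz;\bx)$ finishes the proof.

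The only delicate step is the chain of mutual-information inequalities in the sharper bound: conditional mutual information is not in general dominated by its unconditional counterpart, so the argument hinges on the fact that the conditioning variables $\bx_{\bz_{<k}}$ sit inside $\bx_{-i_j}$, which is exactly what lets the chain rule absorb them into the second argument before monotonicity is invoked.
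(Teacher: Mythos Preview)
Your proof is correct and follows essentially the same route as the paper: the chain-rule decomposition of $\mathrm{TC}_\pd(z_k|\bz_{<k})$ into conditional mutual informations, the monotonicity step $I(X_{i_j};B\mid A)\leq I(X_{i_j};A,B)\leq I(X_{i_j};\bx_{-i_j})$, and the symmetrization over orderings of $z_k$ to obtain $(1-1/s_k)\sum_{i\in z_k}I_\pd(X_i;\bx_{-i})$ are exactly the paper's ingredients, just phrased in information-theoretic rather than log-ratio notation. The only cosmetic difference is that for \eqref{eq:E_fac_bound_gen} you bound each summand directly by $H_\pd(X_{i_j}|\bx_{\bz_{<k}})\leq\log|\sX|$, whereas the paper first establishes the symmetrized bound and then uses $I_\pd(X_i;\bx_{-i})\leq\log|\sX|$; both yield $(s_k-1)\log|\sX|$.
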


\begin{proof}
To prove \eqref{eq:E_fac_bound_gen},
we use the following bound for the total correlation: for any ordering $i_1, \ldots i_{s_k}$ of $z_k$, as $\pd(\bx_{z_k}| \bx_{\bz_{<k}}) = \prod_{\ell=1}^{s_k} \pd(x_{i_\ell}|\bx_{\bz_{<k} \cup \{ i_1, \ldots i_{\ell -1} \}})$ we have 
\begin{align}
\mathrm{TC}_\pd(z_k|\bx_{\bz_{< k}}) 
&=
\sum_{\ell = 2}^{s_k} \E_{\pd(\bx_{z_k}|\bx_{\bz_{< k}})}\left[
\log 
\frac{\pd(x_{i_\ell}|\bx_{\bz_{<k} \cup \{ i_1, \ldots i_{\ell -1} \}})}{
\pd(x_{i_\ell}|\bx_{\bz_{<k}})} \right]
\label{eq:TC_bound_1}
\\
&
\leq 
-\sum_{\ell = 2}^{s_k} \E_{\pd(\bx_{z_k}|\bx_{\bz_{< k}})}\left[
\log \pd(x_{i_\ell}|\bx_{\bz_{<k}}) \right]
\leq
(s_k-1)\log |\sX|\,.\nonumber
\end{align}
Summing over $k$ and using $\sum_k s_k = N$ gives
$\sum_{k\geq 1}\mathrm{TC}_\pd(z_k|\bx_{\bz_{< k}})\leq (N-K)\log|\sX|$, and then \eqref{eq:E_fac_bound_gen} follows from the definition of $E_\text{fact}$. 

We now prove \eqref{eq:E_fac_bound_gen_2}.
If $\nuth(\bz)=\nuth(\bz;\bx)$ then 
$E_\text{fact}=\E_{\nuth(\bz)}\left[
\sum_{k\geq 1}
\E_{\pd(\bx_{z_{<k}})}[
\mathrm{TC}_\pd(z_k|\bx_{\bz_{<k}})
]
\right]$. 
Taking expectations with respect to $\bx_{z_{<k}}$ in \eqref{eq:TC_bound_1}, we obtain
\begin{equation*}
\E_{\pi(\bx_{z_{<k}})}[\mathrm{TC}_\pd(z_k|\bx_{\bz_{< k}})] = \sum_{\ell = 2}^{s_k} \E_{\pd(\bx)}\left[
\log \frac{\pd(x_{i_\ell}|\bx_{\bz_{<k} \cup \{ i_1, \ldots i_{\ell -1} \}})}{
\pd(x_{i_\ell}|\bx_{\bz_{<k}})} \right] 
\leq \sum_{\ell = 2}^{s_k} \E_{\pd(\bx)}\left[
\log \frac{\pd(x_{i_\ell}| \bx_{- i_\ell})}{
\pd(x_{i_\ell})} \right],
\end{equation*}where the inequality follows by the monotonicity of $\mathrm{KL}$ with respect to conditioning. Averaging in the right hand side over all possible ordering of $z_k$, we obtain
\begin{equation*}
\E_{\pi(\bx_{z_{<k}})}[\mathrm{TC}_\pd(z_k|\bx_{\bz_{< k}})]
\leq
\left( 1 - \frac{1}{s_k} \right) \sum_{i \in z_k} \E_{\pd(\bx)}\left[
\log \frac{\pd(x_i| \bx_{- i})}{
\pd(x_i)} \right]. 
\end{equation*}
From the above, and using $\sum_k \sum_{i\in \bz_k}
\E_{\pd(\bx)} \left[ \log \frac{\pd(x_i|\bx_{-i})}{\pd(x_i)} \right]= ND(\pd)$ for any partition $\bz$, as well as $s_k\leq s_{\max}$,
we obtain \eqref{eq:E_fac_bound_gen_2}.
\end{proof}

Both bounds in Proposition \ref{prop:E_fac_bound} can be saturated by adversarial choices of $\pi$ and $\bz$.

\begin{lemma}
\label{lem:example}
The inequality in \eqref{eq:E_fac_bound_gen} is an equality in the following case: if $\bz$ is deterministic and under $\pd$, the vector $\bx$ has uniform marginals laws, with $x_i = x_j$ a.s. for couples $(i,j)$ in the same set in the partition $\bz$ while $\bx_{z_1}, \ldots, \bx_{z_K}$ are all independent;
the inequality in \eqref{eq:E_fac_bound_gen_2} 
is an equality if in addition $s_k=N/K$ for all $k$.
\end{lemma}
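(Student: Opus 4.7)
The plan is to reduce both equality cases to direct computation, by first noting the key structural consequence of the assumptions: because the blocks $\bx_{z_1},\ldots,\bx_{z_K}$ are independent under $\pd$, coordinates inside each block almost surely coincide, and each $x_i$ has uniform marginal on $\sX$, the marginal $\pd(\bx_{z_k})$ is the uniform distribution on the diagonal $\{(a,\ldots,a):a\in\sX\}\subset\sX^{s_k}$, putting mass $1/|\sX|$ on each such diagonal element.

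First I would compute $\mathrm{TC}_\pd(z_k|\bz_{<k})$. By independence of the blocks, $\pd(\bx_{z_k}|\bx_{\bz_{<k}})=\pd(\bx_{z_k})$ and $\pd(x_i|\bx_{\bz_{<k}})=\pd(x_i)=1/|\sX|$. On the support of $\pd$ the coordinates of $\bx_{z_k}$ coincide, and at such a point $\pd(\bx_{z_k})=1/|\sX|$ while $\prod_{i\in z_k}\pd(x_i)=1/|\sX|^{s_k}$. Hence the log-ratio inside $\mathrm{TC}_\pd$ is almost surely the constant $(s_k-1)\log|\sX|$, so $\mathrm{TC}_\pd(z_k|\bz_{<k})=(s_k-1)\log|\sX|$. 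Summing over $k$ and using $\sum_k s_k=N$ gives $E_\text{fact}=(N-K)\log|\sX|$, which matches $(N-\E[K])\log|\sX|$ because $\bz$ is deterministic. This settles equality in~\eqref{eq:E_fac_bound_gen}.

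For the second part I would evaluate $D(\pd)$ under the extra assumption $s_k=N/K$ for all $k$. For $i\in z_k$, if $s_k\geq 2$ then $\bx_{-i}$ contains another coordinate of the same block which equals $x_i$ almost surely, so $\pd(x_i|\bx_{-i})=1$; if $s_k=1$, by independence of the blocks $\pd(x_i|\bx_{-i})=\pd(x_i)=1/|\sX|$. When the common value $N/K$ is at least $2$, every $i$ is in a block of size $\geq 2$, giving $ND(\pd)=N\log|\sX|$, i.e.\ $D(\pd)=\log|\sX|$; since $s_{\max}=N/K$, the right-hand side of~\eqref{eq:E_fac_bound_gen_2} becomes $(N-K)\log|\sX|$, coinciding with $E_\text{fact}$. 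The case $N/K=1$ (i.e.\ $K=N$) is trivial since both sides vanish.

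There is essentially no real obstacle beyond bookkeeping. The only subtlety worth flagging is that the relevant log-ratios are almost surely constant on the support of $\pd$ (rather than only equal in expectation), which is what collapses both $\mathrm{TC}_\pd$ and the correlation terms defining $D(\pd)$ into clean closed forms; and one must separate the degenerate case $K=N$ to avoid vacuous or ill-defined expressions.
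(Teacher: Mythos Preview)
Your proof is correct and follows essentially the same approach as the paper: both compute $\mathrm{TC}_\pd(z_k|\bz_{<k})=(s_k-1)\log|\sX|$ directly from block independence and uniform marginals, then compute $D(\pd)=\log|\sX|$ from $\pd(x_i|\bx_{-i})=1$. You are slightly more careful than the paper in explicitly separating the degenerate case $K=N$ (where both sides vanish trivially), which the paper's terse computation implicitly assumes away.
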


\begin{proof}
A direct computation yields $\mathrm{TC}_\pd(z_k|\bx_{\bz_{< k}})  = (s_k-1) \log |\sX|$ and $D(\pd) = \log |\sX|$. 
Summing over $k$ gives $E_\text{fact}=(N-K)\log |\sX|$, i.e.\ equality in~\eqref{eq:E_fac_bound_gen} and equality in~\eqref{eq:E_fac_bound_gen_2} 
if $N/s_{\max} = K$. 
\end{proof}

If $s_k$ is roughly constant over $k$, then $s_{\max}\approx N/K$, so that the upper bound in \eqref{eq:E_fac_bound_gen_2} is approximately $(N-K)D(\pi)$. In particular if $K,N\to +\infty$ and $N/K\to \bar{s}> 1$, the bound in \eqref{eq:E_fac_bound_gen} grows linearly with $N$, resulting in very weak guarantees on the overall sampling error.
However, such worst-case bounds can be very pessimistic: below we show that if $\bz$ is appropriately randomized, then  $E_\text{fact}$ is provably of much smaller size.

\section{The random-order case}
\label{sec:random_order}

Consider now the situation where $z_k$ is sampled by first generating its size $s_k = |z_k|$ and then sampling its entries uniformly without replacement from $\{1,\dots,N\}\backslash \bz_{<k}$. This is the case in common implementations of MDMs \citep{shi2024simplified}. Specifically, to generate the ordered partition $\bz$, the algorithm first sample the sizes $(s_1, s_2, \ldots, s_K) = \bs$ with $\sum_{k=1}^K s_k = N$. Then, once these sizes are sampled, recursively
\begin{equation}
\label{eq:random_order}
z_k \; \text{  given  } \; (\mathbf{z}_{< k}, \bs) \; \text{  is a subset of } \; \{ 1, \ldots,N \} \setminus \bz_{<k} \; \text{ of size } \; s_k \; \text{ chosen uniformly at random.}
\end{equation}
Under \eqref{eq:random_order}, the distribution of $\bz$ is fully specified by the distribution of its sizes $\bs=(s_1,\dots,s_K)$, or equivalently their cumulative sums $\ba=(a_0,\dots,a_K)$ defined as $a_k = |\bz_{\leq k}|=\sum_{i=1}^ks_i$ with $a_0=0$. 
Thus, to define a planner $\nuth$ we only need to specify the law of $\ba$. 
With a slight abuse of notation, we denote by $\nuth(\ba)$ the law of $\ba$, which here we assume to be independent of $\bx$ for simplicity, and also by $\nuth(\bs)$ and $\nuth(\bz)$ the resulting laws induced on $\bs$ and $\bz$ by \eqref{eq:random_order}.

\subsection{Rewriting the factorization error with the information profile}

We show that, in the random order case, the factorization error has a convenient explicit dependence on the one-dimensional function $f$ defined as
\begin{align}
f(i)&=
\E_{\pd(\bx),\sigma\sim \Unif}\left[
\log \pd(x_{\sigma_{i+1}}|\bx_{\sigma_{\leq i}})
\right]
&i\in\{0,\dots,N-1\}\,.\label{eq:f_exc}
\end{align}
for $\sigma$ being a random permutation of $\{1,\dots,N\}$ uniformly distributed and $\bx_{\sigma_{\leq i}}=(x_{\sigma_j})_{j=1,\dots,i}$.
We refer to $f$ as `\emph{information
profile}' of $\pd$, see also \citet{bauer2008average} for similar terminology used in a different but analogous context. 

\begin{lemma}
\label{lm:prop_f}
For any $\pd$, the information profile $f$ is increasing and satisfies $f(N-1)-f(0)=D(\pd)$.
\end{lemma}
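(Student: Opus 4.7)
The plan is to prove the two claims separately; both rely on the exchangeability provided by the uniformly random permutation $\sigma$.

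For the identity $f(N-1) - f(0) = D(\pd)$, I would simply unfold the definitions. Under a uniform random permutation of $\{1, \ldots, N\}$, the first index $\sigma_1$ is uniformly distributed on $\{1, \ldots, N\}$, so
\begin{equation*}
f(0) = \E_{\pd(\bx), \sigma}[\log \pd(x_{\sigma_1})] = \frac{1}{N} \sum_{i=1}^N \E_{\pd(\bx)}[\log \pd(x_i)].
\end{equation*}
Similarly $\sigma_N$ is uniformly distributed on $\{1, \ldots, N\}$ and, conditionally on $\sigma_N$, the set $\sigma_{\leq N-1}$ equals $\{1, \ldots, N\} \setminus \{\sigma_N\}$, so $\bx_{\sigma_{\leq N-1}} = \bx_{-\sigma_N}$ (viewed as a collection of coordinate values, not caring about ordering). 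Hence
\begin{equation*}
f(N-1) = \E_{\pd(\bx), \sigma}[\log \pd(x_{\sigma_N} | \bx_{-\sigma_N})] = \frac{1}{N} \sum_{i=1}^N \E_{\pd(\bx)}[\log \pd(x_i | \bx_{-i})].
\end{equation*}
Subtracting gives exactly the second form of $D(\pd)$ recalled earlier in the paper.

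For monotonicity, the key observation is a swap argument. Fix $i \in \{0, \ldots, N-2\}$. Since $\sigma$ is uniform, swapping positions $i+1$ and $i+2$ leaves the joint law of $\sigma$ invariant. In particular, the triple $(\bx_{\sigma_{\leq i}}, x_{\sigma_{i+1}}, x_{\sigma_{i+2}})$ is exchangeable in its last two coordinates. Using this symmetry inside $f(i+1)$,
\begin{equation*}
f(i+1) = \E_{\pd(\bx), \sigma}[\log \pd(x_{\sigma_{i+2}} | \bx_{\sigma_{\leq i}}, x_{\sigma_{i+1}})].
\end{equation*}
Now by the monotonicity of $\mathrm{KL}$ under conditioning (equivalently, conditioning reduces entropy, the same property already invoked in the proof of Proposition~\ref{prop:error}), the expected log-conditional on the richer conditioning set $(\bx_{\sigma_{\leq i}}, x_{\sigma_{i+1}})$ is at least the expected log-conditional on $\bx_{\sigma_{\leq i}}$ alone, so
\begin{equation*}
f(i+1) \geq \E_{\pd(\bx), \sigma}[\log \pd(x_{\sigma_{i+2}} | \bx_{\sigma_{\leq i}})].
\end{equation*}
Finally, the exchangeability of $\sigma_{i+1}$ and $\sigma_{i+2}$ given $\sigma_{\leq i}$ identifies this last quantity with $f(i)$, concluding $f(i+1) \geq f(i)$.

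The only mildly subtle point is making the swap argument precise: one has to check that after restricting to the symmetric functional we are computing, the push-forward of uniform $\sigma$ under the transposition of positions $i+1$ and $i+2$ is again uniform. This is immediate because the transposition is a bijection of the symmetric group $S_N$. Once that is noted, the proof is essentially the data-processing/monotonicity-of-KL statement combined with a one-line symmetrization, so I do not expect any real technical obstacle.
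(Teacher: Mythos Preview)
Your proof is correct and follows essentially the same approach as the paper: both arguments combine the inequality ``conditioning reduces entropy'' (which the paper phrases as Jensen's inequality) with the exchangeability of consecutive positions under a uniform permutation, and both identify $f(0)$ and $f(N-1)$ directly with the two sums defining $D(\pd)$. The only difference is cosmetic: the paper drops one conditioning variable from $\pd(x_{\sigma_{i+1}}|\bx_{\sigma_{\leq i}})$ and then averages over $\sigma$ (leaving the swap identification implicit), whereas you make the transposition argument explicit before applying the same entropy inequality.
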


\begin{proof} 
By Jensen's inequality we have $\E_{\pd(\bx)}[\log \pd(x_{\sigma_{i+1}}| \bx_{\sigma_{< i}})] \leq \E_{\pd(\bx)}[\log \pd(x_{\sigma_{i+1}}| \bx_{\sigma_{\leq i}})]$. Averaging over $\sigma$ we find $f(i-1) \leq f(i)$.
Then we decompose $D(\pd)$ as
\begin{equation*}
D(\pd) = \frac{1}{N} \sum_{i=1}^N \E_{\pd(\bx)} \left[\log \left( \frac{\pd(x_i|\bx_{-i})}{\pd(x_i)} \right) \right] = \frac{1}{N} \sum_{i=1}^N \E_{\pd(\bx)}[ \log \pd(x_i|\bx_{-i}) ] - \frac{1}{N} \sum_{i=1}^N \E_{\pd(\bx)}[\log \pd(x_i)].
\end{equation*}
We recognize $f(N-1)$ in the first sum and $f(0)$ in the second one.
\end{proof}

The information profile is all we need to know about the distribution $\pd$ in order to evaluate the factorization error, as the next lemma shows.

\begin{lemma}
\label{lm:value_Efact_random}
Under \eqref{eq:random_order} we have
\begin{align}\label{eq:exch_version}
E_\text{fact}&=\E_{\nuth(\ba)}[A(\ba )]
&\hbox{ with }A(\ba )
= 
\sum_{i=0}^{N-1} f(i) -  \sum_{k=0}^{K-1}(a_{k+1} - a_k) f(a_k)\,.
\end{align}
\end{lemma}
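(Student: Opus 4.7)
The plan is to condition on $\ba$, rewrite the conditional expectation of each $\mathrm{TC}_\pd(z_k \mid \bz_{<k})$ in terms of the information profile $f$, and then recognize the resulting formula as $A(\ba)$.

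First I would observe that, conditionally on the sizes $\bs$ (equivalently $\ba$), the random-order rule~\eqref{eq:random_order} is equivalent to drawing a uniform random permutation $\sigma$ of $\{1,\dots,N\}$ and then partitioning into blocks $z_k = \sigma(\{a_{k-1}+1,\dots,a_k\})$. In particular $\bz_{<k} = \sigma_{\leq a_{k-1}}$ as a set. Using the chain rule along this ordering, for each $k$ we have
\begin{equation*}
\log \pd(\bx_{z_k} \mid \bx_{\bz_{<k}}) \;=\; \sum_{j=a_{k-1}}^{a_k - 1} \log \pd(x_{\sigma_{j+1}} \mid \bx_{\sigma_{\leq j}}),
\end{equation*}
and therefore, taking expectation over $\pd(\bx)$ and $\sigma$ (and using the very definition~\eqref{eq:f_exc} of $f$),
\begin{equation*}
\E_{\pd(\bx), \sigma}[\log \pd(\bx_{z_k} \mid \bx_{\bz_{<k}}) \mid \ba] \;=\; \sum_{j=a_{k-1}}^{a_k-1} f(j).
\end{equation*}

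Next I would handle the marginal part $\sum_{i \in z_k} \E_{\pd(\bx)}[\log \pd(x_i \mid \bx_{\bz_{<k}})]$. Writing the indices of $z_k$ as $\sigma_{j+1}$ for $j = a_{k-1},\dots,a_k-1$, this becomes $\sum_j \E[\log \pd(x_{\sigma_{j+1}} \mid \bx_{\sigma_{\leq a_{k-1}}})]$. The key symmetry step is: since $\sigma$ is uniform, the joint law of $(\sigma_{j+1}, \sigma_{\leq a_{k-1}})$ depends on $j$ only through $j \geq a_{k-1}$, i.e.\ for every such $j$ this pair has the same distribution as $(\sigma_{a_{k-1}+1}, \sigma_{\leq a_{k-1}})$. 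Hence each of the $s_k = a_k - a_{k-1}$ terms equals $f(a_{k-1})$, and we obtain
\begin{equation*}
\sum_{i \in z_k} \E_{\pd(\bx), \sigma}[\log \pd(x_i \mid \bx_{\bz_{<k}}) \mid \ba] \;=\; (a_k - a_{k-1}) f(a_{k-1}).
\end{equation*}

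Subtracting the two displays gives $\E[\mathrm{TC}_\pd(z_k \mid \bz_{<k}) \mid \ba] = \sum_{j=a_{k-1}}^{a_k-1} f(j) - (a_k-a_{k-1}) f(a_{k-1})$. Summing over $k=1,\dots,K$, the intervals $[a_{k-1},a_k-1]$ partition $\{0,1,\dots,N-1\}$, so the first sums telescope into $\sum_{i=0}^{N-1} f(i)$, while a shift of index turns the second piece into $\sum_{k=0}^{K-1}(a_{k+1}-a_k) f(a_k)$. This is exactly $A(\ba)$, and taking a final expectation over $\nuth(\ba)$ yields~\eqref{eq:exch_version}.

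The only non-routine step is the symmetry argument in the marginal part; everything else is bookkeeping. I would emphasize there that what makes all $s_k$ marginal terms collapse to the \emph{same} value $f(a_{k-1})$ is precisely the fact that the conditioning set $\bz_{<k}$ has the distribution of a uniform random subset of size $a_{k-1}$ independent of the identity of the token being conditioned inside $z_k$; this is the feature that the general planner in Proposition~\ref{prop:E_fac_bound} lacks, and the reason the random-order case admits such a clean closed form.
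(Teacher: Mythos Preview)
Your proposal is correct and follows essentially the same approach as the paper: both rely on coupling the random-order planner with a uniform permutation $\sigma$, use the chain rule along $\sigma$ for the joint term and the exchangeability of $(\sigma_{j+1},\sigma_{\leq a_{k-1}})$ for the marginal term. The only cosmetic difference is that the paper first splits $E_\text{fact}$ globally as $\E_{\pd}[\log\pd(\bx)]-\E[\sum_k\sum_{i\in z_k}\log\pd(x_i|\bx_{\bz_{<k}})]$ and identifies each piece, whereas you compute $\E[\mathrm{TC}_\pd(z_k|\bz_{<k})\mid\ba]$ block by block and then sum; the underlying computations are identical.
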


Thus, under \eqref{eq:random_order}, the factorization error coincides with the error in the Riemann approximation
to the integral $\sum_{i=0}^{N-1}f(i)$ of the information profile $f$ with $K$ intermediate steps instead of $N$.

\begin{proof}
We use the definition of $E_\text{fact}$ and expand the total correlation to have
\begin{align}
E_\text{fact}&=
\E_{\pd(\bx)}[\log \pd(\bx)]-\E_{\pd(\bx)\nuth(\bz)}\left[
\sum_{k\geq 1}
\sum_{i\in z_k}
\log \pd(x_i|\bx_{\bz_{<k}})
\right]\,.
\end{align}
Since $\log \pd(\bx)=\sum_{i=0}^{N-1}\log \pd(x_{\sigma_{i+1}}|\bx_{\sigma_{\leq i}})$ for any permutation $\sigma$, it follows that 
\begin{equation*}
\E_{\pd(\bx)}[\log \pd(\bx)]=
\E_{\pd(\bx),\sigma\sim \Unif}\left[
\sum_{i=0}^{N-1}\log \pd(x_{\sigma_{i+1}}|\bx_{\sigma_{\leq i}})
\right]
=
\sum_{i=0}^{N-1} f(i). 
\end{equation*}
On the other hand, by~\eqref{eq:random_order}, if $\bz \sim \nuth$ and $i \in z_k$ uniformly at random then $(i,\bz_{<k})$ has distribution equal to $(\sigma_{a_{k-1} + 1}, \{\sigma_1, \ldots \sigma_{a_{k-1}} \})$ with $\sigma \sim \Unif$. Thus, since $|z_k| = s_k = a_k - a_{k-1}$, we obtain
\begin{align*}
\E_{\pd(\bx)\nuth(\bz)}\left[
\sum_{i\in z_k}
\log \pd(x_i|\bx_{\bz_{<k}})
\right]
& = \E_{\pd(\bx)\nuth(\bz), \sigma \sim \Unif}\left[
|z_k|
\log \pd(x_{\sigma_{a_{k-1} + 1}}|\bx_{\{\sigma_1, \ldots \sigma_{a_{k-1}} \}})
\right] \\ & = \E_{\nuth(\ba)}[(a_{k} - a_{k-1}) f(a_{k-1})].
\end{align*}
The conclusion follows by summing over $k$.
\end{proof}

\subsection{Resulting upper bounds}

We leverage the representation of the factorization error to obtain upper bounds in the random order case that scale much better than in the worst case.

\begin{theorem}
\label{thm:E_fac_exch_bound}
Under \eqref{eq:random_order} we have
$$E_\text{fact}
\leq
(\E_{\nuth(\bs)}[s_{\max}]-1)D(\pd)
\leq 
(\E_{\nuth(\bs)}[s_{\max}]-1)\log |\sX| \,.$$ 
\end{theorem}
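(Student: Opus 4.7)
The second inequality $D(\pd)\le\log|\sX|$ is already recorded just before Proposition~\ref{prop:E_fac_bound}, so all the content is in the first inequality.

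The plan is to start from the representation in Lemma~\ref{lm:value_Efact_random}, which rewrites $E_\text{fact}$ as the expected error, under $\nuth$, of a left Riemann sum for the nondecreasing function $f$ on $\{0,\dots,N-1\}$. Explicitly, $A(\ba)=\sum_{k=0}^{K-1}\sum_{i=a_k}^{a_{k+1}-1}(f(i)-f(a_k))$, and by Lemma~\ref{lm:prop_f} each summand is nonnegative. The crude bound $f(i)-f(a_k)\le f(N-1)-f(0)=D(\pd)$ would give a factor $(N-K)D(\pd)$, which is too weak, so I would refine as follows: expand each gap telescopically as $f(i)-f(a_k)=\sum_{\ell=a_k}^{i-1}(f(\ell+1)-f(\ell))$ and swap the order of summation over $i$ and $\ell$ inside block $k$.

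After this swap, the coefficient of the increment $f(\ell+1)-f(\ell)$ for $\ell\in[a_k,a_{k+1}-2]$ is the number of indices $i\in[\ell+1,a_{k+1}-1]$, namely $a_{k+1}-1-\ell$. This coefficient is bounded by $s_{k+1}-1\le s_{\max}-1$. Since all increments are nonnegative, the union $\bigcup_k [a_k,a_{k+1}-2]$ (which misses the endpoints $a_k-1$) can be freely enlarged to $\{0,\dots,N-2\}$, turning what remains into the telescoping sum $\sum_{\ell=0}^{N-2}(f(\ell+1)-f(\ell))=f(N-1)-f(0)=D(\pd)$, again by Lemma~\ref{lm:prop_f}. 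This yields the pointwise bound $A(\ba)\le(s_{\max}-1)D(\pd)$, and taking expectation under $\nuth(\bs)$ gives the desired $(\E_{\nuth(\bs)}[s_{\max}]-1)D(\pd)$.

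I do not anticipate a serious obstacle: the argument is a one-line Riemann-sum error estimate for a monotone function. The main point to handle with care is the bookkeeping of index ranges — in particular that $f$ is only defined on $\{0,\dots,N-1\}$, so the telescoping must stop at $f(N-1)$, and that the blocks of $\ell$ indices do not tile $\{0,\dots,N-2\}$ exactly but can be completed at no cost thanks to monotonicity of $f$.
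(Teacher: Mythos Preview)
Your proposal is correct and follows essentially the same route as the paper. The paper packages your swap-of-summation step into a separate lemma (Lemma~\ref{lm:discrete_der}), rewriting $A(\ba)=\sum_{i=1}^{N-1}\Delta f(i)\,(r_{\ba}(i)-i)$ with $r_{\ba}(i)=\inf\{a_k:a_k\ge i\}$, and then bounds $r_{\ba}(i)-i\le s_{\max}-1$; your inline computation of the coefficient $a_{k+1}-1-\ell$ of $f(\ell+1)-f(\ell)$ is exactly this identity with the change of variable $i=\ell+1$, and your ``enlarge the union'' step corresponds to the observation that the coefficient vanishes at the missing points $i=a_k$.
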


The proof of Theorem \ref{thm:E_fac_exch_bound} relies on the following algebraic rewriting of the function $A$ defined in Lemma~\ref{lm:value_Efact_random}, whose proof can be found in the appendix.

\begin{lemma}
\label{lm:discrete_der}
Writing $\Delta f(i)=f(i)-f(i-1)$ for the discrete derivative of $f$,
for any $\ba $, we have
\begin{align}\label{eq:discr_der_bound}
&A(\ba )
=\sum_{i=1}^{N-1} \Delta f(i)(r_{\ba}(i)-i)
&
\text{with  } \; r_{\ba}(i)=\inf\{a_k\,:\,a_k\geq i\}\,.
\end{align}
\end{lemma}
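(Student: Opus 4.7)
The plan is to reduce everything to $\Delta f$ via an Abel-summation trick: write $f(i) = f(0) + \sum_{j=1}^{i} \Delta f(j)$, substitute into both sums of $A(\ba)$, and then swap the order of summation.

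First I would handle the constant $f(0)$. Plugging this representation into $\sum_{i=0}^{N-1} f(i)$ contributes $N f(0)$, and plugging it into $\sum_{k=0}^{K-1}(a_{k+1}-a_k) f(a_k)$ also contributes $N f(0)$, because $\sum_{k=0}^{K-1}(a_{k+1}-a_k) = a_K - a_0 = N$. These constants cancel when forming $A(\ba)$, which is essential for the final expression to depend only on $\Delta f$.

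Next I would tackle the non-constant parts. For the first piece, exchanging the order of summation gives
\begin{equation*}
\sum_{i=0}^{N-1} \sum_{j=1}^{i} \Delta f(j) = \sum_{j=1}^{N-1} (N-j) \Delta f(j).
\end{equation*}
For the second piece, exchanging in the same way yields
\begin{equation*}
\sum_{k=0}^{K-1}(a_{k+1}-a_k) \sum_{j=1}^{a_k} \Delta f(j) = \sum_{j=1}^{N-1} \Delta f(j) \!\!\sum_{\substack{0\leq k\leq K-1 \\ a_k \geq j}}\!\!(a_{k+1}-a_k).
\end{equation*}
The key observation is that the inner sum telescopes: since $a_0 < a_1 < \cdots < a_K$, the set $\{k\leq K-1 : a_k \geq j\}$ is either empty or an upper interval $\{k^*, \ldots, K-1\}$, where by definition $a_{k^*} = r_{\ba}(j)$. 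In the non-empty case the telescoping gives $a_K - r_{\ba}(j) = N - r_{\ba}(j)$. I would then check the edge case $j > a_{K-1}$: the index set above is empty, but consistently $r_{\ba}(j) = a_K = N$, so the contribution is again $N - r_{\ba}(j) = 0$, and the formula extends uniformly to all $j \in \{1, \dots, N-1\}$.

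Subtracting the two expressions gives
\begin{equation*}
A(\ba) = \sum_{j=1}^{N-1} \Delta f(j)\bigl[(N-j) - (N - r_{\ba}(j))\bigr] = \sum_{j=1}^{N-1} \Delta f(j)(r_{\ba}(j)-j),
\end{equation*}
which is \eqref{eq:discr_der_bound}. No step is conceptually hard; the only thing to be careful about is the bookkeeping on the index ranges and the edge case $j>a_{K-1}$ in the telescoping step, which I expect to be the one spot where a reader could get confused.
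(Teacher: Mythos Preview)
Your proof is correct: the Abel-summation trick and the telescoping identification of $N-r_{\ba}(j)$ are sound, and the edge case $j>a_{K-1}$ is handled properly. The paper's own argument is essentially the same idea organized block by block---it first groups $A(\ba)=\sum_k\sum_{i=a_k}^{a_{k+1}-1}(f(i)-f(a_k))$, expands $f(i)-f(a_k)$ in terms of $\Delta f$, and then swaps the inner double sum---whereas you perform the swap globally; both are the same Abel-summation manoeuvre.
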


\begin{proof}[Proof of Theorem \ref{thm:E_fac_exch_bound}]
For any $\ba$, as $a_{k+1} - a_k\leq s_{\max}$, we have $0 \leq r_{\ba}(i)-i\leq s_{\max} - 1$. Thus, as $\sum_{i=1}^{N-1} \Delta f(i)=D(\pd)$ and given Lemma~\ref{lm:value_Efact_random}, we have $A(\ba) \leq (s_{\max} - 1) D(\pd)$. The conclusion follows by~\eqref{eq:exch_version} and the bound $D(\pd) \leq \log |\sX|$.
\end{proof}

The bound in Theorem \ref{thm:E_fac_exch_bound} is minimized by taking schedules $\bz$ with near-constant sizes $(s_k)_{k}$, where one can enforce $s_{\max}\leq\lceil N/K\rceil$.
Here and below, we write $\lfloor x \rfloor$, $\lceil x \rceil$ for the largest (resp.\ smallest) integer smaller (resp.\ larger) than $x$.
This results in the bound $E_\text{fact}
\leq
\lceil (N-K)/K\rceil\log |\sX|$, which is a factor of $K$ better than the one in Proposition \ref{prop:E_fac_bound}, showing that random order schedules are guaranteed to perform drastically better than the worst case described in Proposition \ref{prop:E_fac_bound}. For example, if $N/K$ and $|\sX|$ are fixed, the bound in Theorem \ref{thm:E_fac_exch_bound} is, remarkably, independent of $N$. 
Note that in both in Proposition~\ref{prop:E_fac_bound} and Theorem~\ref{thm:E_fac_exch_bound} we recover that $E_\text{fact} = 0$ if we set $K=N$.

A bound similar to the one in Theorem \ref{thm:E_fac_exch_bound} was recently derived in \citet[Theorem 1]{li2025convergence} with a different and significantly less direct proof approach.

\subsection{Explicit computations with geometric schedules and lower bounds}

Interestingly, one can compute almost exactly the value of $E_\text{fact}$ for the case of random, geometrically distributed schedule sizes, without requiring any assumption on $\pd$.
Specifically, given $p\in(0,1)$ and $m\in\N$, let $\Geom (p;m)$ denote a Geometric distribution starting from $1$ and with a threshold at $m$, i.e.\ a random variable $X\sim \Geom (p;m)$ satisfies $\mathrm{Pr}(X=i)=(1-p)^{i-1}p$ for $i\in\{1,\dots,m-1\}$ and $\mathrm{Pr}(X=m)=(1-p)^{m-1}$. The proof of the following result relies on the memoryless property of the geometric distribution, it can be found in the appendix.

\begin{proposition}
\label{prop:E_fac_geom}
Assume we generate the sequence $\bs = (s_1, \ldots, s_K)$ as follows: $s_1\sim \Geom (p;N)$ and $s_k| \bs_{<k}\sim \Geom (p;N-\sum_{i=1}^{k-1}s_i)$ for $k=2,3\dots$. 
Then under~\eqref{eq:random_order} we have the upper bound
\begin{equation*}
E_\text{fact} \leq \frac{1-p}{p}D(\pd),
\end{equation*}
as well as the lower bound
\begin{equation*}
E_\text{fact} \geq \frac{1-p}{p}D(\pd) - \left( \frac{1-p}{p} \right)^2 \left( \max_{i=1,\dots,N-1} \Delta f(i) \right)\,.
\end{equation*}
\end{proposition}

If the information profile varies smoothly, given that $\sum_{i=1}^{N-1} \Delta f(i) = D(\pd)$, it is reasonable to expect $\max_i \Delta f(i) = \bigO(D(\pd)/N)$ as $N \to + \infty$, leading to the estimate 
\begin{equation*}
E_\text{fact} = \left( 1 + \bigO \left( \frac{1}{N} \right) \right) \cdot \frac{1-p}{p}D(\pd)
\end{equation*}
in the setting of Proposition~\ref{prop:E_fac_geom}. 
The $\bigO(1/N)$ terms relates to an `edge effect', that is, the truncation of the geometric random variables at $N$, see the proof of Proposition \ref{prop:E_fac_geom} in the appendix for more details. 
Here the number of steps $K$ is random, with $K\approx p N$ and $\E[s_k]\approx \frac{1}{p}$
or equivalently $\frac{1-p}{p} \approx (\E[s_k]-1)$ for all $k$. 
Thus, Proposition \ref{prop:E_fac_geom} can be interpreted as stating that
$$
E_\text{fact}
\approx
(\E[s_k]-1)D(\pd)\,.
$$
This suggests that the upper bound in Theorem \ref{thm:E_fac_exch_bound} is tight for schedules where $s_{\max}\approx \E[s_k]$. 
Actually by Markov's inequality, still in the limit $N \to + \infty$, it implies that $A(\ba )$ is of order $(\E[s_k]-1)D(\pd)$ with high probability when $\ba $ is sampled as in Proposition~\ref{prop:E_fac_geom}.

\subsection{The case of a fixed, randomly-generated ordering}

The above results apply to sampling strategies that randomize $\bz$ at every generation step, namely versions of 
Algorithm \ref{alg:gen} with $\nuth$ satisfying \eqref{eq:random_order}.
This, however, requires the user to have access to all conditionals, i.e.\ to learn  $\pth(x_i;\bx_{z})\approx\pd(x_i|\bx_{z})$ for every $z\subseteq \{1,\dots,N\}$ and $i\notin z$, since any combination of $i$ and $z$ could appear during sampling.

Nonetheless, analogous guarantees can be obtained for strategies that first pick a random order (independent of $\pd$), and then keep it fixed during generation. 
Indeed, since $E_\text{fact}=\E_{\nuth(\bz)}[E_\text{fact}(\bz)]$ 
with $E_\text{fact}(\bz) = \sum_{k\geq 1} \E_{\pi(\bx_{\bz_{< k}})}[\mathrm{TC}_\pd(z_k | \bx_{\bz_{< k}})]\geq 0$, a simple application of Markov's inequality yields $\Pr_{\nuth(\bz)}(E_\text{fact}(\bz)\geq c E_\text{fact})\leq 1/c$, so that choosing $s_{\max}\leq\lceil N/K\rceil$ Theorem~\ref{thm:E_fac_exch_bound} implies
\begin{equation}\label{eq:markov_fixed_z}
\mathrm{Pr}_{\nuth(\bz)}\left( E_\text{fact}(\bz)\geq c \left\lceil \frac{N-K}{K} \right\rceil D(\pd) \right)\leq \frac{1}{c}.
\end{equation}
Thus, one could instead first generate a schedule $\bz=(z_1,\dots,z_K)$, learn only a fixed set of conditionals, namely $\pth(x_i;\bx_{\bz_{<k}})\approx\pd(x_i|\bx_{\bz_{<k}})$ for every $k=1,\dots, K$ and $i\in z_{k}$, for such pre-specified $\bz$, and then apply Algorithm \ref{alg:gen} with such fixed $\bz$.
By \eqref{eq:markov_fixed_z}, this procedure would enjoy, with high probability, the same theoretical guarantees as Algorithm \ref{alg:gen} with $\nuth$ satisfying \eqref{eq:random_order} in terms of controlling $E_\text{fact}$, while potentially simplifying the process of learning $\pth$, see e.g.\ \citep{kim2025train}. 

We expect concentration results stronger than \eqref{eq:markov_fixed_z} to hold for $E_\text{fact}(\bz)$ as $N,K$ increase, possibly under some additional assumptions on the information profile of $\pd$, but leave those to future research.

\section{Optimal schedules and scaling limits}

We now consider the problem of minimizing $E_\text{fact}$ with respect to $\nuth$ for fixed $K$, in the random-order case defined in \eqref{eq:random_order}. To that end we recall that $E_\text{fact} = \E_{\nuth(\ba)}[A(\ba)]$ with $\ba$ encoding the size of $(\bz_{\leq k})_{k}$, see~\eqref{eq:exch_version}. Thus $E_\text{fact} \geq \min_{\ba} A(\ba)$, and we have equality if $\nuth(\ba)$ is deterministic and picks a minimizer of $A$. In other words, to minimize $E_\text{fact}$, it is enough to restrict to deterministic schedule sizes. 
This leads to the following optimization problem:
\begin{align}
\label{eq:optimal_schedule}
\min_{\ba =(a_0,\dots,a_K)}&
A(\ba )
&\hbox{given }
0=a_0<a_1<\dots<a_{K} = N\, ;
\end{align}
which is the focus of this section.

A question of interest is how much can be gained by using non-constant increments, i.e.\ deviating from the case $s_k = a_{k}-a_{k-1} \approx N/K$ for all $k$, where 
Theorem \ref{thm:E_fac_exch_bound} and Proposition \ref{prop:E_fac_geom} show that $E_\text{fact}$ is of order $(\E[s_{\max}]-1)D(\pd) \approx \frac{N-K}{K} D(\pd)$.
We first give a qualitative analysis in the case $N,K$ finite, and then we look at what happens when $N,K \to + \infty$. In the latter case the optimization problem~\eqref{eq:optimal_schedule} becomes a problem of calculus of variations, whose solution can sometimes be found explicitly. 

\subsection{A qualitative analysis of the optimal schedule}

We first note that the set of optimization variables in~\eqref{eq:optimal_schedule} is a finite set of cardinality ${N-1}\choose{K-1}$. Thus there always exists an optimal schedule, i.e. a solution to~\eqref{eq:optimal_schedule}, but finding it by enumeration is intractable as $N$ and $K$ grow.

A natural question is to know if optimal schedules should have increasing or decreasing sizes $s_k = a_{k} - a_{k-1}$. Intuitively, we should take $s_k$ increasing if most of the dependence structure of $\pd$ is captured by the first components that are sampled, that is, if conditionally to $\bx_{z}$ with $|z|$ small then the remaining components are weakly dependent. We show that this is connected to the convexity of the information profile.

Specifically we say that the information profile $f$ is (strictly) convex if $i \mapsto \Delta f(i)$ is (strictly) increasing. Analogously, $f$ is (strictly) concave if $i \mapsto \Delta f(i)$ is (strictly) decreasing. The proof of the following can be found in the appendix. 

\begin{proposition}
\label{prop:f_a_convex_concave}
Assume that $f$ is strictly increasing and $\ba$ solves~\eqref{eq:optimal_schedule}. Writing $s_k = a_k - a_{k-1}$, there holds for all $k\in\{0,\dots,K-1\}$,   
\begin{align}
\label{eq:sequential_sol}
s_{k+1} \in \left[\frac{f(a_{k}-1) - f(a_{k-1})}{\Delta f(a_k)},
\frac{f(a_{k}+1) - f(a_{k-1})}{\Delta f(a_k+1)}
\right].
\end{align}
In addition, if $f$ is strictly convex (resp.\ strictly concave) then $(s_k)_k$ is non-increasing (resp.\ non-decreasing).
\end{proposition}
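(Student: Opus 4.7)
The plan is to exploit first-order optimality for the discrete problem~\eqref{eq:optimal_schedule}. Since $a_0 = 0$ and $a_K = N$ are fixed and the admissible set is finite, the minimum is attained; at a minimizer $\ba$, moving any interior knot $a_k$ by $\pm 1$ must not decrease $A$. Using $A(\ba) = \sum_{i=0}^{N-1} f(i) - \sum_{\ell=0}^{K-1}(a_{\ell+1} - a_\ell) f(a_\ell)$ and the fact that only the $\ell = k-1$ and $\ell = k$ terms depend on $a_k$, a short expansion gives, for $k \in \{1,\dots,K-1\}$,
\begin{align*}
A(\ba + e_k) - A(\ba) &= f(a_k + 1) - f(a_{k-1}) - s_{k+1}\,\Delta f(a_k + 1), \\
A(\ba - e_k) - A(\ba) &= s_{k+1}\,\Delta f(a_k) - \bigl(f(a_k - 1) - f(a_{k-1})\bigr),
\end{align*}
where $e_k$ is the unit vector on coordinate $k$. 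Imposing both quantities to be $\geq 0$ and dividing by $\Delta f > 0$ (strict monotonicity of $f$) yields exactly the two sides of the interval~\eqref{eq:sequential_sol}. When a variation is infeasible because $s_k = 1$ or $s_{k+1} = 1$, the corresponding side of the interval reduces to $f(a_{k-1}) \leq f(a_k)$ or to $0 \leq s_{k+1}$, both of which hold trivially, so the interval remains valid.

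For the monotonicity claim under strict convexity of $f$, I would rewrite the upper bound as
\begin{equation*}
s_{k+1}\,\Delta f(a_k + 1) \leq f(a_k + 1) - f(a_{k-1}) = \sum_{j=1}^{s_k + 1} \Delta f(a_{k-1} + j),
\end{equation*}
and observe that strict convexity gives $\Delta f(a_{k-1} + j) \leq \Delta f(a_k + 1)$ for every $j$ in the sum, with strict inequality whenever $j \leq s_k$. Summing, the right-hand side is strictly less than $(s_k + 1)\,\Delta f(a_k + 1)$, forcing $s_{k+1} < s_k + 1$, i.e. $s_{k+1} \leq s_k$. The concave case is symmetric: start from the lower bound $s_{k+1}\,\Delta f(a_k) \geq \sum_{j=1}^{s_k - 1}\Delta f(a_{k-1} + j)$ and use that each term strictly exceeds $\Delta f(a_k)$ to conclude $s_{k+1} \geq s_k$ (the edge case $s_k = 1$ being trivial).

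The main obstacle is almost entirely bookkeeping: correctly identifying which one-sided variations are feasible (the constraints $a_{k-1} < a_k \pm 1 < a_{k+1}$) and checking that the interval bound~\eqref{eq:sequential_sol} persists when a variation is ruled out. Once the two displayed increments above are computed, every other step is elementary arithmetic combined with the defining monotonicity of $\Delta f$ under convexity/concavity of $f$.
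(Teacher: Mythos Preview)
Your proof is correct and follows essentially the same approach as the paper's: perturb an interior knot $a_k$ by $\pm 1$, read off the two optimality inequalities, divide by $\Delta f>0$ to get~\eqref{eq:sequential_sol}, and then use strict convexity/concavity of $f$ to bound the telescoping sum and force $s_{k+1}<s_k+1$ (resp.\ $s_{k+1}>s_k-1$). Your treatment is in fact slightly more careful than the paper's, since you explicitly check that the interval bounds survive when a $\pm 1$ variation is infeasible ($s_k=1$ or $s_{k+1}=1$), a point the paper glosses over.
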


The convexity versus concavity of the information profile depends on the dependance structure of $\pi$. For example, in Section \ref{sec:gauss_info_profile} of the appendix, we compute explicitly the information profile for exchangeable multivariate Gaussian distributions, where convexity versus concavity of $f$ is in one-to-one correspondence with  negative versus positive correlation among coordinates in $\pi$.

We will not analyse further the problem~\eqref{eq:optimal_schedule} without additional assumptions as it does not admit an analytical solution as far as we know. We only note that if the interval in \eqref{eq:sequential_sol} contains only one integer (or a few of them), then it gives a recursive relation to compute $a_{k+1}$ given $a_k, a_{k-1}$, which can be used to find an optimal schedule.

\subsection{Scaling limit: the setting}
\label{sec:scaling_limit_setting}

We turn to the limit $N, K \to + \infty$. 
We define the non-constant schedule $\ba$ through a continuous function:
given an increasing function $\alpha : [0,1] \to [0,1]$ satisfying $\alpha_0=0$ and $\alpha_1=1$, the schedule $\ba = \ba^{N,K}$ is defined as 
\begin{align}
\label{eq:link_a_alpha}
a^{N,K}_k &= \lceil N \alpha_{k/K} \rceil
&k=0,\dots,K.
\end{align}
By construction $\ba^{N,K}$ is increasing with $a^{N,K}_0= 0$ and $a^{N,K}_K = N$, and the curve $(\alpha_t)_{t \in [0,1]}$ is a suitable limit of a rescaled version of the schedule $\ba^{N,K}$ as $N, K \to + \infty$.

\begin{remark}
Various authors have proposed to define the non-constant schedule $\ba$ through a continuous function, see e.g.\ \citet{shi2024simplified} and references therein.
This is usually done in the context of MDMs defined through continuous-time Markov chains where the sizes $s_k$ are usually random, for example $s_k\sim \mathrm{Binomial}(N-a_{k-1},p_k)$ with $p_k$ depending on the specific discretization mechanism used and on a continuous functions that specifies the schedule; see e.g.\ the function $\alpha$ defined in equation (1) of \citet{shi2024simplified}.
\end{remark}

Under suitable assumptions, the problem~\eqref{eq:optimal_schedule} with the ansatz~\eqref{eq:link_a_alpha} converges to a problem of calculus of variations in the variable $(\alpha_t)_{t \in [0,1]}$. 
Specifically, assume that $(\pd^N)_{N \geq 1}$ is a sequence of probability distributions, with $\pd^N$ a probability distribution on $\sX^N$. We write $f^N : \{ 0, \ldots, N-1 \} \to \R$ for the information profile of $\pd^N$ and $g^N : [0,1-\frac{1}{N}) \to \R_+$ for the rescaled version of $\Delta f^N$: 
specifically $g^N$ is the piecewise constant function: 
\begin{align}\label{eq:def_gN}
g^N \left( u \right) = \frac{N}{D(\pd^N)} \Delta f^N(i) \quad \text{for } u \in \left[ \frac{i-1}{N}, \frac{i}{N} \right) & & i =1, \ldots, N-1. 
\end{align}
By Lemma~\ref{lm:prop_f} we see that $g^N \geq 0$ and $\int_0^{1-1/N} g^N(u) \rmd u = 1$, which explains the normalization we choose for $g^N$.
We will assume that $g^N$ converges, as $N \to + \infty$, to a continuous function, the scaling limit of the derivative of the information profile. Specifically we make the following assumptions on $g^N$ and on the curve $(\alpha_t)_{t \in [0,1]}$ used in~\eqref{eq:link_a_alpha}.

\begin{assumption}\label{ass:reg}
$(\alpha_t)_{t \in [0,1]}$ is of class $C^1$ and 
$g^N$ converges uniformly to a continuous function $g : [0,1] \to 
\R_+$ as $N\to\infty$.
\end{assumption}

\subsection{The case of a diverging number of unmasked variables}
We first assume $N,K \to + \infty$ and $N/K \to + \infty$. 
In this case the average number of unmasked variables $s_k$ diverges: though arguably less relevant in practice, this limit is the simplest and the limiting problem can be solved in closed form. The proof of the following result, quite technical, can be found in the appendix.

\begin{theorem}\label{thm:scal_lim_diverge}
Under Assumption \ref{ass:reg}, if $K, N \to + \infty$ with $N/K\to + \infty$ then 
\begin{equation*}
A^N(\ba^{N,K}) = \frac{D(\pd^N)}{2} \frac{N}{K} \left( \int_0^1 g(\alpha_t) \dot{\alpha}_t^2 \, \rmd t + \smallo(1) \right)\,.
\end{equation*}
\end{theorem}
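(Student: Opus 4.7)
The starting point is the representation from Lemma~\ref{lm:discrete_der}, which I rewrite as a block sum. For $i\in\{a_{k-1}+1,\dots,a_k\}$ one has $r_{\ba}(i)=a_k$, so
\begin{equation*}
A^N(\ba^{N,K})=\sum_{k=1}^{K}\sum_{i=a_{k-1}+1}^{a_{k}} \Delta f^N(i)\,(a_k-i).
\end{equation*}
Using the definition of $g^N$ I replace $\Delta f^N(i)$ by $\frac{D(\pd^N)}{N}\,g^N\!\left(\frac{i-1}{N}\right)$ exactly, so the whole expression becomes $\frac{D(\pd^N)}{N}\sum_k \Sigma_k$ with $\Sigma_k:=\sum_{i=a_{k-1}+1}^{a_k} g^N\!\left(\frac{i-1}{N}\right)(a_k-i)$.

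The plan is then to show $\Sigma_k=\frac{1}{2}g(\alpha_{k/K})\,s_k^{2}\,(1+o(1))$ where $s_k=a_k-a_{k-1}$, with $o(1)$ uniform in $k$. First I would use the $C^1$ regularity of $\alpha$ together with $a_k=\lceil N\alpha_{k/K}\rceil$ to get
\begin{equation*}
s_k = \tfrac{N}{K}\,\dot{\alpha}_{k/K}+\varepsilon_k^{(1)},\qquad |\varepsilon_k^{(1)}|\leq 2+\tfrac{N}{K}\,\omega_{\dot\alpha}(1/K),
\end{equation*}
where $\omega_{\dot\alpha}$ is the modulus of continuity of $\dot\alpha$. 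In the block $k$, the argument $(i-1)/N$ stays within $O(1/K)$ of $\alpha_{k/K}$, so by uniform convergence of $g^N$ to the continuous $g$, the quantity $g^N((i-1)/N)$ equals $g(\alpha_{k/K})+\varepsilon_k^{(2)}$ with $\varepsilon_k^{(2)}\to 0$ uniformly in $k$ and $i$. Combined with the elementary identity $\sum_{i=a_{k-1}+1}^{a_k}(a_k-i)=\frac{s_k(s_k-1)}{2}$, this yields
\begin{equation*}
\Sigma_k=\tfrac{1}{2}\,g(\alpha_{k/K})\,s_k^{2}\,(1+o(1))-\tfrac{1}{2}g(\alpha_{k/K})s_k+o(s_k^2).
\end{equation*}

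Next I substitute the approximation $s_k^{2}=(N/K)^{2}\dot{\alpha}_{k/K}^{2}+\eta_k$ with $\eta_k=o((N/K)^2)$ uniformly in $k$, whose verification uses both the bound on $\varepsilon_k^{(1)}$ above and the hypothesis $N/K\to+\infty$ (crucial to absorb the additive $O(1)$ from the ceiling and to make the lower-order term $\frac{1}{2}g(\alpha_{k/K})s_k$ negligible compared to the $s_k^{2}$ contribution). Summing in $k$ gives
\begin{equation*}
\sum_{k=1}^{K}\Sigma_k = \tfrac{1}{2}\,\tfrac{N^{2}}{K^{2}}\sum_{k=1}^{K}g(\alpha_{k/K})\,\dot{\alpha}_{k/K}^{2}\,(1+o(1)),
\end{equation*}
and by uniform continuity of $t\mapsto g(\alpha_t)\dot{\alpha}_t^{2}$ on $[0,1]$ the Riemann sum $\frac{1}{K}\sum_k g(\alpha_{k/K})\dot{\alpha}_{k/K}^{2}$ converges to $\int_0^1 g(\alpha_t)\dot{\alpha}_t^{2}\,\rmd t$. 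Multiplying by the prefactor $D(\pd^N)/N$ one obtains exactly the claimed asymptotic.

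The main obstacle is the uniform control of the $o(1)$ error terms across all blocks simultaneously: in particular, when $\dot\alpha$ vanishes on part of $[0,1]$ the corresponding $s_k$'s stay bounded, so the approximation $s_k(s_k-1)\approx s_k^{2}$ fails in those blocks; one must check that these blocks also contribute negligibly to both sides of the identity (their share of $\int g(\alpha)\dot\alpha^2$ is small, and the block sum there is $O(s_k)=O(1)$ while we divide the whole expression by $N/K\to\infty$), so their error is absorbed. The regime $N/K\to\infty$ is what makes this accounting clean, in contrast to the bounded regime treated in Theorem~\ref{thm:scal_lim_bounded}.
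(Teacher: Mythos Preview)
Your proposal is correct and follows essentially the same route as the paper: block-sum rewriting via Lemma~\ref{lm:discrete_der}, replacing $\Delta f^N$ by $g$ using uniform convergence and continuity, controlling $s_k\approx \tfrac{N}{K}\dot\alpha_{k/K}$ via $C^1$ regularity, and concluding by Riemann-sum convergence. The only cosmetic difference is that the paper first shifts to $\tilde A^N=A^N+\tfrac{D(\pd^N)}{2}$, which turns the inner weight $(a_k-i)$ into $(a_k-i+\tfrac12)$ and makes the block sum exactly $\tfrac{s_k^2}{2}$ rather than $\tfrac{s_k(s_k-1)}{2}$; this avoids your separate handling of the $-\tfrac12 g(\alpha_{k/K})s_k$ term, but since $\sum_k s_k=N=o(N^2/K)$ your direct treatment is equally valid. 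Your discussion of blocks where $\dot\alpha$ vanishes is overly cautious: the accounting only needs the \emph{global} bound $\sum_k s_k=o(\sum_k s_k^2)$, not a pointwise $s_k(s_k-1)\approx s_k^2$, so no special case analysis is actually required there.
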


The above shows that $A^N(\ba^{N,K})$ is asymptotically equivalent to 
$\frac{D(\pd^N)}{2} \frac{N}{K}c(\alpha)$ with $c(\alpha)=\int_0^1 g(\alpha_t) \dot{\alpha}_t^2 \, \rmd t$. In other words, the schedule's shape $\alpha$ influences the limiting value of $E_\text{fact}$ through the multiplicative factor $c(\alpha)$. 
Thus we can look for the schedule which minimizes $c(\alpha)$, which gives a very classical problem of calculus of variations, a geodesic problem in a non-uniform environment, described by the metric tensor $g$. We report the solution of this problem, with the proof in appendix for completeness.

\begin{proposition}
\label{prop:calculus_var}
If $g$ is continuous and strictly positive, the solution to the problem of calculus of variations
\begin{align*}
\min_{\alpha : [0,1] \to [0,1]} & \int_0^1 g(\alpha_t) \dot{\alpha}_t^2 \, \rmd t,   & \text{such that} \quad  \alpha_0 = 0, \; \alpha_1 = 1, 
\end{align*}
is $\alpha_t = G^{-1}( t G(1))$, with $G(y) = \int_0^y \sqrt{g(u)} \rmd u$ an antiderivative of $\sqrt{g}$. The optimal value is 
\begin{equation*}
\int_0^1 g(\alpha_t) \dot{\alpha}_t^2 \, \rmd t =  \left( \int_0^1 \sqrt{g(u)} \, \rmd u \right)^2.   
\end{equation*}
\end{proposition}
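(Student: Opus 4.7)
The plan is to use the Cauchy--Schwarz inequality to obtain a lower bound on the functional and then exhibit the candidate $\alpha_t = G^{-1}(tG(1))$ as an equality case.

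First, I would reduce the functional to something more manageable by Cauchy--Schwarz. Since $\alpha$ is assumed to satisfy $\alpha_0 = 0$, $\alpha_1 = 1$ and is (by the problem statement) $C^1$, applying Cauchy--Schwarz with the factors $\sqrt{g(\alpha_t)} \dot{\alpha}_t$ and $1$ gives
\begin{equation*}
\left( \int_0^1 \sqrt{g(\alpha_t)} \, \dot{\alpha}_t \, \rmd t \right)^2 \leq \left( \int_0^1 g(\alpha_t) \dot{\alpha}_t^2 \, \rmd t \right) \left( \int_0^1 1 \, \rmd t \right).
\end{equation*}
The left-hand side can then be evaluated by the change of variables $u = \alpha_t$, for which $\rmd u = \dot{\alpha}_t \, \rmd t$, yielding $\int_0^1 \sqrt{g(u)}\, \rmd u = G(1)$. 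This already gives the lower bound $\int_0^1 g(\alpha_t) \dot{\alpha}_t^2 \, \rmd t \geq G(1)^2$, which is the claimed optimal value.

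Next I would verify that the proposed $\alpha_t = G^{-1}(tG(1))$ is well-defined and achieves equality. Since $g$ is continuous and strictly positive, $G$ is $C^1$ and strictly increasing on $[0,1]$, hence invertible with $C^1$ inverse; moreover $G(0) = 0$ and $G(G^{-1}(G(1))) = G(1)$, so the boundary conditions $\alpha_0 = 0$ and $\alpha_1 = 1$ hold. Differentiating $G(\alpha_t) = tG(1)$ gives $\sqrt{g(\alpha_t)} \dot{\alpha}_t = G(1)$, i.e. the integrand is constant, which is precisely the equality case of Cauchy--Schwarz. Plugging back yields $\int_0^1 g(\alpha_t) \dot{\alpha}_t^2 \, \rmd t = G(1)^2$.

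The argument is essentially self-contained, and I do not foresee a serious obstacle: the only subtlety is to make sure the change of variables is legitimate (which requires $\alpha$ to be monotone, guaranteed by $\dot{\alpha}_t \geq 0$; otherwise one can phrase the change of variables via the area formula since the integrand $\sqrt{g(u)}$ depends only on the image) and that $G^{-1}$ is of class $C^1$, which follows from $g>0$ and the inverse function theorem.
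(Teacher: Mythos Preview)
Your proof is correct and is essentially the same as the paper's: the paper first substitutes $\beta_t = G(\alpha_t)$ and applies Jensen's inequality $\int_0^1 \dot{\beta}_t^2\,\rmd t \geq (\int_0^1 \dot{\beta}_t\,\rmd t)^2$, which is exactly your Cauchy--Schwarz step with the same equality case. One small remark: your change-of-variables subtlety can be sidestepped by writing $\int_0^1 \sqrt{g(\alpha_t)}\,\dot{\alpha}_t\,\rmd t = \int_0^1 \tfrac{\rmd}{\rmd t} G(\alpha_t)\,\rmd t = G(1)-G(0)$ via the fundamental theorem of calculus, which needs no monotonicity of $\alpha$.
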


As $\int_0^{1-1/N} g^N(u) \rmd u = u$, passing to the limit we have $\int_0^1 g(u) \rmd u = 1$.
Thus the ratio between the optimal constant $c(\alpha^\text{opt})$ and the one of the uniform schedule $c(\alpha^\text{unif})$ corresponding to $\alpha^\text{unif}_t = t$ is given by 
\begin{equation*}
\frac{c(\alpha^\text{opt})}{c(\alpha^\text{unif})} = \left( \int_0^1 \sqrt{g(u)} \, \rmd u \right)^2 = \frac{ \left( \int_0^1 \sqrt{g(u)} \rmd u  \right)^2}{\int_0^1 g(u) \, \rmd u}.
\end{equation*}
This quantity is always smaller than $1$ thanks to Jensen's inequality, and more interestingly it becomes smaller if there is a bigger gap in Jensen's inequality.  This formalizes the intuition that non-constant schedules are more beneficial the further the information profile is from linear.

The optimal continuous schedule is thus given by $\alpha_t = G^{-1}( t G(1))$. Interestingly, we can check easily that $\alpha$ is convex (resp.\ concave) if $g$ is non-increasing (resp.\ non-decreasing), which is consistent with Proposition~\ref{prop:f_a_convex_concave}.
In the (very likely) case where $g$ is not available in closed from, it makes sense to define the schedule in a data driven way. To do so, first note that $g \approx  \frac{N}{D(\pd^N)} \Delta f^N$ and thus $\sqrt{\frac{D(\pd^N)}{N}} G \left( \frac{n}{N} \right) \approx \sum_{i=0}^{n-1} \sqrt{\Delta f^N(i)}$, leading to the schedule
\begin{equation}
\label{eq:data_driven_schedule}
a^{N,K}_k = \min \left\{ n \ : \ \sum_{i=0}^{n-1} \sqrt{\Delta f^N(i)} \geq \frac{k}{K}  \sum_{i=0}^{N-1} \sqrt{\Delta f^N(i)} \right\}, 
\end{equation}
which we expect to be asymptotically optimal by Theorem \ref{thm:scal_lim_diverge} and Proposition \ref{prop:calculus_var}.
Equation \eqref{eq:data_driven_schedule} can then be used in conjunction with empirical estimates of the information profile $f$ to define data-driven optimal schedule sizes.
For instance, estimates of $f$ can be obtained by approximating $f(i)=
\E_{\pd(\bx),\sigma\sim \Unif}\left[
\log \pi(x_{\sigma_{i+1}}|\bx_{\sigma_{\leq i}})
\right]$ as
\begin{equation}\label{eq:est_f_i}
f(i)
\approx
\frac{1}{N-i}\sum_{j\notin \bz}
\sum_{\ell\in\sX}\pth(x_j=\ell;\bx_{\bz})\log\pth(x_j=\ell;\bx_{\bz})\,,    
\end{equation}
with $\bz$ sampled uniformly at random from the subsets of $\{ 1, \ldots,N \}$ of size $i$, and 
$\bx$ being a sample from $\pi$ (obtained by picking it uniformly from the available training dataset). 
The expectation of the right-hand side of \eqref{eq:est_f_i} is $
\E_{\pd(\bx),\sigma\sim \Unif}\left[
\log \pth(x_{\sigma_{i+1}};\bx_{\sigma_{\leq i}})
\right]$, which coincides with $f(i)$ modulo the error in the approximation $\pth\approx \pi$.
The estimator in \eqref{eq:est_f_i} can then be combined with, e.g.,  kernel smoothing or other variance reduction techniques to obtain an estimate of the function $i\mapsto f(i)$ that can be used to approximate the asymptotically optimal schedule defined in \eqref{eq:data_driven_schedule}.
We leave more discussion and exploration of low-variance estimators of $f$, $\Delta f$ and $a^{N,K}$ to future work.

\begin{remark}[$\Gamma$-convergence]
Theorems \ref{thm:scal_lim_diverge}, and \ref{thm:scal_lim_bounded} below, only analyze the pointwise limit of $A^N(\ba^{N,K})$ as $N,K \to + \infty$ under Assumption~\ref{ass:reg}. A proper mathematical analysis would require the $\Gamma$-convergence of $A^N(\cdot)$ in order to guarantee that convergence of the optimizers and minimal value of $A^N(\cdot)$ to the problems of calculus of variations of Proposition~\ref{prop:calculus_var}. Given that the pointwise limit is already quite technical to prove and allows us to draw interesting conclusions, we do not pursue this avenue here. 
\end{remark}

Other works, such as \citep{zhang2025cosine}, also propose schedules minimizing a problem of calculus of variations having a structure similar to ours. However, we emphasize that the objective we optimize is directly related to the approximation error of the algorithm (through the factorization error), and the optimal schedule we obtain depends on the target distribution (through the information profile).

\subsection{The case of a bounded number of unmasked variables}

We now turn to the case where the typical size $N/K$ does not diverge but rather converges to a finite limit $\bar s \in[1,\infty)$. We define $h_{\bar{s}} : [0, + \infty) \to [0, + \infty)$ as the continuous function such that $h_{\bar{s}}(u) = u^2$ if $u =n/\bar{s}$ for $n \in \mathbb{N}$, and which is piecewise affine in between: in formula, 
\begin{equation*}
h_{\bar{s}}(u) = \frac{1}{\bar{s}^2} (1 - \{ \bar{s} u \}) \lfloor \bar{s} u \rfloor^2 + \frac{1}{\bar{s}^2} \{ \bar{s} u  \} \lceil \bar{s} u \rceil^2,
\end{equation*} 
where $\{ u \} = u - \lfloor u \rfloor \in [0,1)$ denotes the fractional part of $u$. The proof of the following can be found in the appendix.

\begin{theorem}\label{thm:scal_lim_bounded}
Under Assumption \ref{ass:reg}, and assuming that $\dot{\alpha}$ has a finite number of minimum and maximum points, if $K, N \to + \infty$ with $N/K\to \bar{s}\in[1,\infty)$ there holds
\begin{equation*}
A^N(\ba^{N,K}) = \frac{D(\pd^N)}{2}  \left( \bar{s} \int_0^1 g(\alpha_t) h_{\bar{s}} (\dot{\alpha}_t) \, \rmd t - 1 + \smallo(1) \right). 
\end{equation*}
\end{theorem}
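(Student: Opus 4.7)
The argument extends that of Theorem~\ref{thm:scal_lim_diverge}, with the crucial new feature that the discreteness of $s_k = a_k^{N,K}-a_{k-1}^{N,K}$ is now of leading order. I would first rewrite $A^N$ via Lemma~\ref{lm:discrete_der} in block form, grouping $i \in \{1,\dots,N-1\}$ by the block $\{a_{k-1}+1,\dots,a_k\}$ to which it belongs, obtaining $A^N(\ba^{N,K}) = \sum_{k=1}^K \sum_{j=0}^{s_k-1} \Delta f^N(a_{k-1}+1+j)(s_k-1-j)$. Since $N/K\to\bar{s}<\infty$ and $\dot{\alpha}$ is continuous on $[0,1]$, $s_k$ is uniformly bounded; combined with the uniform convergence $g^N\to g$ and the uniform continuity of $g$, the identity $\Delta f^N(i) = \frac{D(\pd^N)}{N}g^N((i-1)/N)$ lets me replace every $g^N$-value inside the $k$-th block by $g(\alpha_{k/K})$ at an $\smallo(D(\pd^N))$ cost. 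Using $\sum_{j=0}^{s_k-1}(s_k-1-j) = \frac{s_k(s_k-1)}{2}$, the statement reduces to proving
\begin{equation*}
\frac{1}{N}\sum_{k=1}^K g(\alpha_{k/K})\frac{s_k(s_k-1)}{2} = \frac{\bar{s}}{2}\int_0^1 g(\alpha_t) h_{\bar{s}}(\dot{\alpha}_t)\,\rmd t - \frac{1}{2} + \smallo(1).
\end{equation*}

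I would then split $s_k(s_k-1) = s_k^2 - s_k$. For the linear piece, writing $s_k = N(\alpha_{k/K}-\alpha_{(k-1)/K}) + (\epsilon_k-\epsilon_{k-1})$ with $\epsilon_k := \lceil N\alpha_{k/K}\rceil - N\alpha_{k/K}\in[0,1)$, an Abel summation combined with the uniform continuity of $g$ makes the $(\epsilon_k-\epsilon_{k-1})$ terms contribute $\smallo(1)$, so $\frac{1}{N}\sum_k g(\alpha_{k/K})s_k$ is asymptotic to the Riemann sum for $\int_0^1 g(\alpha_t)\dot{\alpha}_t\,\rmd t = \int_0^1 g(y)\,\rmd y = 1$ (using $\int_0^{1-1/N} g^N = 1$ and uniform convergence). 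This yields the $-\tfrac{1}{2}$ contribution, leaving the quadratic limit
\begin{equation*}
\frac{1}{N}\sum_{k=1}^K g(\alpha_{k/K})s_k^2 \;\longrightarrow\; \bar{s}\int_0^1 g(\alpha_t) h_{\bar{s}}(\dot{\alpha}_t)\,\rmd t.
\end{equation*}

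This limit is where $h_{\bar{s}}$ emerges and constitutes the main obstacle. The guiding combinatorial fact is that for an exact arithmetic progression $\beta_k = \beta_0 + kc$ with $c\in\R_+\setminus\N$, the ceiling increments $\lceil\beta_k\rceil-\lceil\beta_{k-1}\rceil$ lie in $\{\lfloor c\rfloor,\lceil c\rceil\}$ with empirical proportions $1-\{c\}$ and $\{c\}$ respectively, so the empirical mean of their squares is $(1-\{c\})\lfloor c\rfloor^2+\{c\}\lceil c\rceil^2 = \bar{s}^2 h_{\bar{s}}(c/\bar{s})$. In our setting $\beta_k := N\alpha_{k/K}$ is only a perturbed arithmetic progression with locally-varying step $\bar{s}\dot{\alpha}_{k/K}+\smallo(1)$, so I would partition $[0,1]$ into the finitely many pieces where $\dot{\alpha}$ is monotonic (this is where the finite-extrema assumption enters) and further subdivide each piece into small sub-intervals of length $\delta$ on which $\dot{\alpha}$ is nearly constant equal to some $v_m$. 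Monotonicity forces the $s_k$ falling outside $\{\lfloor\bar{s}v_m\rfloor,\lceil\bar{s}v_m\rceil\}$ to occur only at the sub-interval boundaries or where $\bar{s}\dot{\alpha}$ crosses an integer, contributing only $\bigO(1)$ exceptional terms per sub-interval with negligible total effect. The empirical mean of $s_k^2$ on each sub-interval therefore converges to $\bar{s}^2 h_{\bar{s}}(v_m)$; summing $\delta\cdot g(\alpha_{t_m})\bar{s}^2 h_{\bar{s}}(v_m)$ over $m$ and letting first $K\to\infty$ and then $\delta\to 0$ gives the claim via a Riemann-sum argument and the continuity of $h_{\bar{s}}$ (modulo the factor $N/K\to\bar{s}$). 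The bulk of the work is quantitatively bounding the number and effect of the anomalous increments and establishing uniformity of this Beatty-type equidistribution across sub-intervals.
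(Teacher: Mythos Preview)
Your proposal is correct and reaches the same conclusion, but by a genuinely different route from the paper's proof. The paper first shifts to $\tilde A^N=A^N+\tfrac{D(\pi^N)}{2}$ so that the inner block-sum collapses directly to $s_k^2/2$ (avoiding your separate linear/quadratic split), arriving at the same reduction
\[
\frac{1}{K}\sum_{k} g(\alpha_{k/K})\,s_k^2 \;\longrightarrow\; \bar s^{\,2}\!\int_0^1 g(\alpha_t)\,h_{\bar s}(\dot\alpha_t)\,\rmd t.
\]
From here the methods diverge. The paper introduces the empirical measures $\gamma^{N,K}=\tfrac1K\sum_k\delta_{(k/K,\,s_k)}$ on $[0,1]\times\R_+$, passes to a subsequential weak limit $\gamma$ by compactness, identifies its support as $\{\lfloor\bar s\dot\alpha_t\rfloor,\lceil\bar s\dot\alpha_t\rceil\}$ by the same local observation you use, and then pins down the weights $1-\{\bar s\dot\alpha_t\}$, $\{\bar s\dot\alpha_t\}$ by testing $\gamma^{N,K}$ against $p\mapsto p$ over arbitrary intervals (the telescoping sum constraint) and invoking the Lebesgue differentiation theorem; uniqueness of the limit then upgrades subsequential to full convergence. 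Your approach instead partitions $[0,1]$ into monotone pieces and small $\delta$-sub-intervals and applies the arithmetic-progression fact directly on each, then assembles a Riemann sum. Both routes rest on the same two ingredients---the two-value support and the sum constraint---but the paper packages them via soft measure theory while you do the bookkeeping by hand.

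One small imprecision: near a point where $\bar s\dot\alpha$ crosses an integer $n$, the increments on the two sides live in $\{n-1,n\}$ and $\{n,n+1\}$ respectively, so the number of ``exceptional'' $s_k$ on that sub-interval is not $O(1)$ but a positive fraction of the indices. This does not break your argument, since the continuity of $h_{\bar s}$ (which you do invoke) makes the contribution from either side differ from $\bar s^2 h_{\bar s}(v_m)$ by $O(\eta)$; alternatively, refine the partition once more at the finitely many integer-crossing points. With that adjustment your elementary argument goes through and has the advantage of avoiding disintegration and the Lebesgue differentiation theorem; the paper's version is slicker but leans on heavier machinery.
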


\begin{remark}[Quantization effect]
The analysis of the case $K, N \to + \infty$ but $N/K \to \bar{s}$ is more delicate, the typical size $\bar{s}$ is still present in the expression of the limit. The explanation for it is the following: from~\eqref{eq:link_a_alpha} we should have $s_{k+1} = a_{k+1} - a_k \approx \frac{N}{K} \dot{\alpha}_{k/K}$, however we also know that $a_{k+1} - a_k$ should be an integer. Thus the difference between $a_{k+1} - a_k$ and $\frac{N}{K} \dot{\alpha}_{k/K} \approx \bar{s} \dot{\alpha}_{k/K}$ does not vanish in the limit, there is a quantization effect still present due to the transformation of the continuous variable $N \alpha_{k/K}$ into a integer-valued one in \eqref{eq:link_a_alpha}.
This quantization effect explains why there is the term $h_{\bar{s}}(\dot{\alpha}_t)$ in the limiting integral, which should be thought as an approximation of the square function, instead of $\dot{\alpha}_t^2$ as in Theorem~\ref{thm:scal_lim_diverge}.   
As $\bar{s} \to + \infty$, we have $h_{\bar{s}}(u) \to u^2 $ for all $u$, so that the integral term $\int_0^1 g(\alpha_t) h_{\bar{s}} (\dot{\alpha}_t) \, \rmd t$ converges to $\int_0^1 g(\alpha_t) \dot{\alpha}_t^2 \, \rmd t$ featured in the previous limit. 
\end{remark}

Contrary to the previous case, minimizing the integral $\int_0^1 g(\alpha_t) h_{\bar{s}} (\dot{\alpha}_t) \, \rmd t$ in order to look for an asymptotically optimal schedule looks more challenging, in particular because the function $h_{\bar{s}}$ is not of class $C^1$. However as $\bar s$ increases we expect the solution to the previous case to be close to optimal. 
We can make the latter claim quantitative as follows: by the convexity of the square function we can check $u^2 \leq h_{\bar{s}}(u) \leq u^2 + \frac{1}{4 \bar{s}^2}$, thus
\begin{equation*}
A^N(\ba^{N,K}) \leq \frac{D(\pd^N)}{2}  \left( \bar{s} \int_0^1 g(\alpha_t) \dot{\alpha}_t^2 \, \rmd t + \frac{\sup g}{4 \bar{s}}  - 1 + \smallo(1) \right).
\end{equation*}
This yields an asymptotic bound if we use the schedule $\alpha_t = G^{-1}(t G(1))$ given in Proposition~\ref{prop:calculus_var} which is better than the uniform schedule, if $\bar s$ is large enough.

\section{Extensions and future work}\label{sec:disc}
It would be interesting to extend our results in various directions. For example, the sampling algorithms we study in this work (i.e.\ those falling into the framework of Algorithm \ref{alg:gen}) are not allowed to remove or modify any coordinate $x_i$ after having generated it. Recently, various authors considered improving and generalizing MDMs by adding so-called corrector or remasking steps, where coordinates can be removed or resampled after being generated, see e.g.\ \citep{liu2024think,zhao2024informed,wang2025remasking}. It would be valuable to extend our theoretical results to these settings to explore and quantify potential benefits of them.
Similarly, it would be interesting to analyse  methods that employ adaptive planners where $z_k$ depends on $\bx_{\bz_{<k}}$, see e.g.\ \citep{ben2025accelerated,luxembourg2025plan,kim2025train,peng2025path}, in order to theoretically quantify the gains they can obtain relative to random-order strategies that satisfy \eqref{eq:random_order}.
Finally, while we developed our theory assuming $\sX$ to be a discrete and finite space (which is the typical setting in applications of MDMs) we expect all our results to directly extend to general state spaces $\sX$ with the only modification that $\log|\sX|=\infty$ in such case, but $D(\pd)$ is still a finite quantity.

\section*{Acknowledgments}
The authors thank Michalis K.\ Titsias for useful conversations.

\bibliographystyle{abbrvnat}

\bibliography{bibliography}

\appendix

\section{Toy example: the Gaussian case}\label{sec:gauss_info_profile}

In this appendix we consider the case of Gaussian multivariate distributions for which the information profile can be computed explicitly. 
It provides a concrete example where the convexity or concavity of the information profile depends explicitly on the model parameters.
It also emphasizes that the notions we discuss should also apply when $\pd$ is a continuous distribution.

We take $\sX = \R$ and $\pd$ to be a centered Gaussian over $\R^N$ with covariance matrix
\begin{equation*}
\Sigma = 
\begin{pmatrix}
1 & \rho & \ldots & \rho \\
\rho & 1 & \ddots & \vdots \\
\vdots & \ddots & \ddots & \rho \\
\rho & \ldots & \rho & 1 
\end{pmatrix},
\end{equation*}
with $\rho \in [-\frac{1}{N-1}, 1]$ the pairwise correlation. In the extreme case $\rho =1$ then $x_1 = \ldots = x_N$ a.s., while in the extreme case $\rho= - \frac{1}{N-1}$ we rather have $x_1+ \ldots + x_N = 0$ a.s..
The intermediate case $\rho = 0$ corresponds to independent components.

Elementary computation yields that $\pd(x_{\sigma_{i+1}} | x_{\sigma_{\leq i}} )$ is a Gaussian distribution of variance $\frac{(1-\rho)(1+i \rho)}{1 + (i-1)\rho}$.
Thus we find the information profile: 
\begin{equation*}
f(i) = - \frac{1}{2} \left[ \log(2 \pd e) + \log(1-\rho) + \log \frac{1+i\rho}{1 + (i-1)\rho}   \right].
\end{equation*}
It is convex for $\rho \leq 0$ and concave for $\rho \geq 0$, as well as strictly increasing. From Proposition~\ref{prop:f_a_convex_concave}, we deduce that the optimal schedule should select $(s_k)_k$ decreasing for $\rho < 0$ and increasing for $\rho > 0$. It is compatible with the following intuition: if $\rho > 0$ increases, we are leaning towards $x_1 = \ldots = x_N$ a.s. and $x_2, \ldots, x_N$ become more deterministic and independent conditionally to $x_1$. On the other hand, if $\rho < 0$ gets closer to $-\frac{1}{N-1}$, we lean towards the extreme case $x_1+ \ldots + x_N = 0$ a.s..
In this case, it is at the end of the sampling that $s_k$ should be small in order to sample accurately the last components and enforce $x_1+ \ldots + x_N = 0$.

We also find
\begin{equation*}
D(\pd) = \frac{1}{2} \log \left( \frac{1 + (N-2) \rho}{1+(N-2) \rho - (N-1) \rho^2} \right).
\end{equation*}
To study a scaling limit, we fix $\xi \in (-1, + \infty)$ and consider $\rho^N = \frac{\xi}{N}$. With $\pd^N$ the Gaussian measure above with $\rho = \rho^N$, we obtain 
$D(\pd^N)\sim \frac{\xi^2}{2N(1+\xi)}$ as $N\to\infty$.
If we look at $g^N$ the derivative of the renormalized information profile as in Section~\ref{sec:scaling_limit_setting}, we obtain that $g^N \to g$ uniformly with $g(u) = \frac{1+\xi}{(1+ \xi u)^2}$. 
In particular, with Theorem~\ref{thm:scal_lim_diverge} and Proposition~\ref{prop:calculus_var} suggest to use in the limit $K,N \to + \infty$ with $N/K \to + \infty$ the exponential schedule:
\begin{equation*}
\alpha_t = \frac{(1+\xi)^t - 1}{\xi} \, , \qquad \text{yielding} \qquad E_\text{fact} \sim \frac{\ln(1+\xi)^2}{4K} \, .   
\end{equation*}

\section{Additional proofs}

\subsection{Auxiliary results}

We collect here the proof of several auxiliary results, not necessarily technical, but whose proof would have broken the flow of the main manuscript.

We start with Lemma \ref{lemma:arbitrary_schedule}.  Before proving the lemma we note that, given $\bx$, $\nuth(\bz;\bx)$ is a probability mass function over ordered partitions of $\{ 1, \ldots, N \}$, in the sense $\sum_{\bz} \nuth(\bz;\bx) = 1$, however it does \emph{not} coincide with the conditional distribution $\pb(\bz|\bx)$. Similarly given a schedule $\bz$ of unmasking, $\pth(\bx;\bz)$ is a probability mass function over $\sX^N$ which does not coincide with the conditional distribution $\pb(\bz|\bx)$.  

\begin{proof}[Proof of Lemma \ref{lemma:arbitrary_schedule}]
Under the assumption and with the notations above, for any partition $\bz$, 
$$
\pth(\bz;\bx)
=
\prod_{k=1}^K \pth(\bx_{z_k};\bx_{\bz_{<k}})
=
\prod_{k=1}^K \pd(\bx_{z_k}|\bx_{\bz_{<k}})
=
\pd(\bx)
$$
does not depend on $\bz$ and thus 
$$
\pb(\bx)
=
\sum_\bz \pb(\bx,\bz)
=
\sum_\bz \pd(\bx)\nuth(\bz;\bx)
=
\pd(\bx)\sum_\bz \nuth(\bz;\bx)=\pd(\bx)\,. \qedhere
$$
\end{proof}

\begin{proof}[Proof of Lemma \ref{lm:discrete_der}]
We start from $(a_{k+1} - a_k) f(a_k) = \sum_{i=a_k}^{a_{k+1}-1} f(a_k)$. Thus grouping the terms in the definition of $A$ and using the definition of $\Delta f$
\begin{equation*}
A(\ba)= \sum_{k=0}^{K-1} \sum_{i=a_k}^{a_{k+1}-1} (f(i) - f(a_k)) = \sum_{k=0}^{K-1} \sum_{i=a_k}^{a_{k+1}-1} \sum_{j=a_k +1}^i \Delta f(j).
\end{equation*}
For a fixed $k$ we exchange the order of the inner double summation:
\begin{equation*}
\sum_{i=a_k}^{a_{k+1}-1} \sum_{j=a_k +1}^i \Delta f(j) = \sum_{j=a_k +1}^{a_{k+1}-1} \sum_{i=j}^{a_{k+1}-1} \Delta f(i) = \sum_{j=a_k +1}^{a_{k+1}-1} (a_{k+1} - j) \Delta f(j).
\end{equation*}
If $j \in \{ a_k +1, a_{k+1}-1 \}$ then $a_{k+1} = r_\ba(j)$. Moreover the sum in $j$ could be extended to $j = a_{k+1}$ as in this case $r_\ba(j) - j = 0$. Thus the latter sum coincide with $\sum_{j=a_k +1}^{a_{k+1}} (r_\ba(j) - j) \Delta f(j)$. Summing over $k$ gives the expression in \eqref{eq:discr_der_bound}.
\end{proof}

\begin{proof}[Proof of Proposition \ref{prop:E_fac_geom}]
We use the notations of Lemma~\ref{lm:discrete_der}.
By definition of $r_{\ba}(i)$ and the memoryless property of the Geometric distribution we have $(1+r_{\ba}(i)-i)\sim \Geom (p;N-i+1)$. 
Since the expectation of a $\Geom (p;m)$ distribution is $(1-(1-p)^m)/p$, it follows that 
$$\E[r_{\ba}(i)-i]=\frac{1}{p}\left(1-(1-p)^{N-i+1}\right)-1=\frac{1-p}{p}\left(1-(1-p)^{N-i}\right)\,.$$
Thus, by \eqref{eq:discr_der_bound}, and using $\sum \Delta f(i) = f(N-1) - f(0) = D(\pd)$,
\begin{align*}
E_\text{fact}
=
\sum_{i=1}^{N-1} 
\E[(r_{\ba}(i)-i)]\Delta f(i)
&=
\frac{1-p}{p}
\sum_{i=1}^{N-1}(1-(1-p)^{N-i})\Delta f(i)\\
&
= \frac{1-p}{p}D(\pd) - \frac{1-p}{p} \sum_{i=1}^{N-1} (1-p)^{N-i} \Delta f(i).
\end{align*}
The upper bound follows directly from $\Delta f \geq 0$. For the lower bound, we bound $\Delta f(i)$ by its maximum and use $\sum_{i=1}^{N-1} (1-p)^{N-i} \leq \sum_{i=1}^{\infty} (1-p)^{N-i} =  (1-p)/p$.
\end{proof}

\begin{proof}[Proof of Proposition \ref{prop:f_a_convex_concave}]
We start by deriving some optimality conditions for an optimal schedule. That is, we fix $\ba$ a solution of~\eqref{eq:optimal_schedule}. For a given $k = 1, \ldots, K-1$, writing $\ba'_{\pm} = (a_0, a_1, \ldots, a_{k-1}, a_k \pm 1, a_{k+1}, \ldots, a_K)$ and expanding $A(\ba'_\pm) \geq A(\ba)$, we obtain the two necessary conditions:
\begin{align}
\label{eq:optimality_1}
(a_{k+1} - a_k) \Delta f(a_k +1)
+ f(a_{k-1}) - f(a_{k}+1) & \leq 0, \\
\label{eq:optimality_2}
-(a_{k+1} - a_k) \Delta f(a_k)
+ f(a_{k}-1) - f(a_{k-1})  & \leq 0.
\end{align}
Reordering these equations we find~\eqref{eq:sequential_sol}. 

Next we assume that $f$ is strictly convex and strictly increasing. By strict convexity we have $f(a_{k}+1) - f(a_{k-1}) < (a_k + 1 - a_{k-1}) \Delta f(a_k +1)$. Plugging this in~\eqref{eq:optimality_1} we obtain 
\begin{equation*}
(a_{k+1} - a_k) \Delta f(a_k +1) < (a_k + 1 - a_{k-1}) \Delta f(a_k +1).
\end{equation*}
Dividing by $\Delta f(a_k +1) > 0$, we obtain $a_{k+1} - a_k < a_k  - a_{k-1} + 1$. 
Since the $a_k$'s are integer, we conclude $a_{k+1} - a_k \leq a_k  - a_{k-1}$. As this is valid for any $k$, the sequence $(s_k)_k$ is non-increasing.  

Then, we assume that $f$ is strictly concave and strictly increasing. We use~\eqref{eq:optimality_2} together with $f(a_{k}-1) - f(a_{k-1}) > (a_k - 1 - a_{k-1}) (f(a_{k}) - f(a_{k} - 1))$ by strict concavity, so that 
\begin{equation*}
(a_k - 1 - a_{k-1}) \Delta f(a_k)  < (a_{k+1} - a_k)\Delta f(a_k). 
\end{equation*}
We divide by $\Delta f(a_k) > 0$ and obtain $a_k  - a_{k-1} - 1 < a_{k+1} - a_k  $, so that $a_k  - a_{k-1}  \leq a_{k+1} - a_k  $ as they are integers. The conclusion follows as $k$ is arbitrary. 
\end{proof}

\subsection{A classical problem of calculus of variations}

We report here the proof of Proposition \ref{prop:calculus_var} for completeness. Though the solution can be find by solving the Euler-Lagrange equation, in this case we rely only on Jensen's inequality. 

\begin{proof}[Proof of Proposition \ref{prop:calculus_var}]
As $g$ is strictly positive and continuous, the function $G$ is a $C^1$ diffeomorphism. Thus if $(\alpha_t)_{t}$ is any competitor, we can consider $\beta_t = G(\alpha_t)$ which is also a curve of class $C^1$. As $\dot{\beta}_t = \sqrt{g(\alpha_t)} \dot{\alpha_t}$, we find with Jensen's inequality
\begin{equation*}
\int_0^1 g(\alpha_t) \dot{\alpha}_t^2 \, \rmd t = \int_0^1 \dot{\beta}_t^2 \, \rmd t \geq \left( \int_0^1 \dot{\beta}_t \, \rmd t \right)^2 = (G(\alpha_1) - G(\alpha_0))^2 = G(1)^2.  
\end{equation*}
Moreover there is equality if and only if the function $(\dot{\beta}_t)_{t}$ is constant, which can only happen if $\beta_t = t G(1)$ for all $t$, leading to $\alpha_t = G^{-1}(t G(1))$.
\end{proof}

\subsection{Proof of Theorem~\ref{thm:scal_lim_diverge} and Theorem~\ref{thm:scal_lim_bounded}: the scaling limits}

We present here the proofs of Theorem~\ref{thm:scal_lim_diverge} and Theorem~\ref{thm:scal_lim_bounded}. They are longer and more technical than the others proofs, and we put them in an appendix to avoid breaking the flow of the presentation.

\underline{A preliminary observation.}
We collect the following bound which follows from the definition of $\ba^{N,K}$ in~\eqref{eq:link_a_alpha}: for any $k$, 
\begin{equation}
\label{eq:bound_s_k_apriori}
0 \leq a^{N,K}_{k+1} - a^{N,K}_{k} \leq 1 + \frac{N}{K} \sup_{t \in [0,1]} \dot{\alpha}_t. 
\end{equation}

\underline{1st step: some algebraic manipulations.}
In order to have a slight algebraic simplification later, we will rather look at $\tilde{A}^{N} = A^N + \frac{D(\pd^N)}{2}$. As $D(\pd^N) = \sum_{i=1}^{N-1} \Delta f^N(i)$, calling $r^{N,K}(i)=\inf\{a^{N,K}_k\,:\,a^{N,K}_k\geq i\}$ we have with Lemma~\ref{lm:discrete_der}:
\begin{equation*}
\tilde{A}^N(\ba^{N,K} ) = A^N(\ba^{N,K} ) + \frac{D(\pd^N)}{2} = \sum_{i=1}^{N-1} \Delta f^N(i) \left(r^{N,K}(i)-i + \frac{1}{2} \right). 
\end{equation*}
Moreover, we group the sum by the value of $r^{N,K}$, ending up with the expression
\begin{equation}
\label{eq:expr_tildeA}
\tilde{A}^N(\ba^{N,K} ) = \sum_{k=0}^{K-1} \sum_{i=a^{N,K}_k+1}^{a^{N,K}_{k+1}} \Delta f^N(i) \left(a^{N,K}_{k+1}-i + \frac{1}{2} \right). 
\end{equation}

\underline{2nd step: transforming the objective into (almost) a Riemann sum.}
Next we claim that we have 
\begin{equation}
\label{eq:estimate_still_joint}
\tilde{A}^N(\ba^{N,K} ) = \frac{D(\pd^N)}{2N} \sum_{k=0}^{K-1} g(\alpha_{k/K}) \left(a^{N,K}_{k+1}-a^{N,K}_k  \right)^2 + \smallo \left( \frac{D(\pd^N) N}{K} \right). 
\end{equation}
Let's prove this claim.
Given the uniform convergence of $g^N$ to a continuous limit $g$, with 
\begin{equation*}
\varepsilon_N = \sup_{i = 1, \ldots, N-1} \left| \frac{N}{D(\pd^N)} \Delta f^N(i) - g \left( \frac{i}{N} \right) \right|,
\end{equation*}
we have that $\varepsilon_N \to 0$ as $N \to + \infty$. 

In the expression~\eqref{eq:expr_tildeA}, in the $k$-th block we will replace $\Delta f^N(i)$ by $\frac{D(\pd^n)}{N} g(\alpha_{k/K})$. Specifically if $i \in \{ a^{N,K}_k +1, \ldots , a^{N,K}_{k+1} \}$ we write with the triangle inequality
\begin{equation*}
\left| \frac{N}{D(\pd^N)} \Delta f^N(i) - g(\alpha_{k/K})  \right| \leq \left| \frac{N}{D(\pd^N)} \Delta f^N(i) -  g \left( \frac{i}{N} \right)  \right|  + \left|  g \left( \frac{i}{N} \right) - g(\alpha_{k/N}) \right|,
\end{equation*}
The first term is bounded by $\varepsilon_N \to 0$. To handle the second one, using~\eqref{eq:bound_s_k_apriori}, the definition of $a^{N,K}_k$ in~\eqref{eq:link_a_alpha} and the notation $C = \sup \dot{\alpha}_t$, we have:
\begin{equation*}
\left| \frac{i}{N} - \alpha_{k/N} \right|  \leq \left| \frac{i}{N} - \frac{a^{N,K}_k}{N}\right| + \left| \frac{a^{N,K}_k}{N} - \alpha_{k/N} \right| \leq \frac{a^{N,K}_{k+1} - a^{N,K}_k}{N} + \frac{1}{N} \leq \frac{C}{K} + \frac{2}{N}, 
\end{equation*}
Thus if $\omega$ is a modulus of continuity of $g$, we conclude putting the two pieces together that
\begin{equation*}
\left| \frac{N}{D(\pd^N)} \Delta f^N(i) - g(\alpha_{k/K})  \right| \leq \omega \left( \frac{C}{K} + \frac{2}{N} \right) + \varepsilon_N. 
\end{equation*}
Summing all these error terms in the expression~\eqref{eq:expr_tildeA}, using $a^{N,K}_{k+1}-i + \frac{1}{2} \leq a^{N,K}_{k+1}-a^{N,K}_k + \frac{1}{2}$, 
\begin{multline*}
\left| \tilde{A}^N(\ba^{N,K} ) - \frac{D(\pd^N)}{N} \sum_{k=0}^{K-1} \sum_{i=a^{N,K}_k+1}^{a^{N,K}_{k+1}} g(\alpha_{k/K}) \left(a^{N,K}_{k+1}-i + \frac{1}{2} \right) \right| \\
\leq \frac{D(\pd^N)}{N} \cdot \left\{ \sum_{k=0}^{K-1} \left(a^{N,K}_{k+1}-a^{N,K}_k + \frac{1}{2} \right)^2 \right\} \cdot \left( \omega \left( \frac{C}{K} + \frac{2}{N} \right) + \varepsilon_N \right)
\end{multline*}
In the sum in the right hand side we use that $a^{N,K}_{k+1}-a^{N,K}_k = \bigO(N/K)$ by~\eqref{eq:bound_s_k_apriori}, so that the whole right hand side is $\smallo(D(\pd^N) N / K)$. 
Since $\sum_{i=a+1}^b(b-i+1/2)=\sum_{i=1}^{b-a-1}(i+1/2)=(b-a)^2/2$ for any $0\leq a<b$ integers, we can simplify the left-hand side and obtain \eqref{eq:estimate_still_joint}.

To go further than~\eqref{eq:estimate_still_joint} we will relate $a^{N,K}_{k+1}-a^{N,K}_k$ to the derivative $\dot{\alpha}$. By the definition~\eqref{eq:link_a_alpha}, we have
\begin{equation}
\label{eq:link_Delta_a_alpha_dot}
\left| a^{N,K}_{k+1}-a^{N,K}_k -  N (\alpha_{(k+1)/K} - \alpha_{k/K}) \right| < 1 \quad \text{and} \quad a^{N,K}_{k+1}-a^{N,K}_k \in \N.
\end{equation}
At this point we need to differentiate the case $N/K\to +\infty$ and $N/K \to \bar{s}$.

\underline{3rd step (Theorem~\ref{thm:scal_lim_diverge}): convergence if $N/K\to \infty$.} 
We always have $\alpha_{(k+1)/K} - \alpha_{k/K} = \frac{\dot{\alpha}_{k/K}}{K} + \smallo(1/K)$, uniformly in $k$ because $\dot \alpha$ is bounded. Moreover, the distance between $a^{N,K}_{k+1}-a^{N,K}_k$ and $N (\alpha_{(k+1)/K} - \alpha_{k/K})$ is smaller than $1$, and is thus a $\smallo(N/K)$ as $N/K\to \infty$. We deduce
\begin{equation*}
a^{N,K}_{k+1}-a^{N,K}_k  
= \frac{N}{K} \dot{\alpha}_{k/K} + \smallo \left( \frac{N}{K} \right)
= \frac{N}{K} (\dot{\alpha}_{k/K} +\smallo(1)),
\end{equation*}
with the $\smallo(1)$ being uniform in $k$.
Plugging this in~\eqref{eq:estimate_still_joint}, we obtain  
\begin{equation*}
\tilde{A}^N(\ba^{N,K} ) = \frac{D(\pd^N)}{2N}  \cdot \frac{N^2}{K^2} \sum_{k=0}^{K-1} g(\alpha_{k/K}) (\dot{\alpha}_{k/K} + \smallo(1))^2  + \smallo\left( \frac{D(\pd^N) N}{K} \right).   
\end{equation*}
The conclusion follows: as the function $t \mapsto g(\alpha_t) \dot{\alpha}_t^2$ is continuous, we have convergence of the Riemann sum
\begin{equation*}
\frac{1}{K}\sum_{k=0}^{K-1} g(\alpha_{k/K}) (\dot{\alpha}_{k/K} + \smallo(1))^2 \to \int_0^1 g(\alpha_t) \dot{\alpha}_t^2 \, \rmd t.
\end{equation*}
Moreover $\tilde{A}^N(\ba^{N,K} ) - A^N(\ba^{N,K}) = \frac{D(\pd^N)}{2}$ which can be absorbed in the error term $\smallo(D(\pd^N) N / K)$.

\underline{3rd step bis (Theorem~\ref{thm:scal_lim_bounded}): convergence if $N/K \to \bar{s}$.}
In this case recall that~\eqref{eq:estimate_still_joint} is still valid, but now the link between $a^{N,K}_{k+1} - a^{N,K}_{k}$ and the derivative $\dot{\alpha}$ is more subtle. Given the statement of the theorem and the expression of $A^N(\ba^{N,K} ) = \tilde{A}^N(\ba^{N,K}) - \frac{D(\pd^N)}{2}$, we only need to prove that
\begin{equation}
\label{eq:to_prove}
\frac{1}{K} \sum_{k=0}^{K-1} g(\alpha_{k/K}) \left(a^{N,K}_{k+1}-a^{N,K}_k  \right)^2 \to \bar s^2 \int_0^1 g(\alpha_t) h_{\bar{s}}(\dot{\alpha}_t) \, \rmd t. 
\end{equation}

We need to analyse the distribution of the values of $a^{N,K}_{k+1} - a^{N,K}_{k}$. We start with an auxiliary Lemma which helps solidify our intuition and which will be useful later.

\begin{lemma}
\label{lm:proportion_linear}
For parameters $\beta > 0$ and $\eta \in \R$, define $b_k = \lceil \beta k + \eta \rceil$. Then the sequence $(b_{k+1} - b_k)_{k \geq 0}$ can only take the values $\lfloor \beta \rfloor$ and $\lceil \beta \rceil$. Moreover, the number of times it takes the value $\lfloor \beta \rfloor$ (resp. $\lceil \beta \rceil$) for $k = 0, \ldots, K-1$, when divided by $K$, converges to $1-\{ \beta\}$ (resp. $\{ \beta \}$) as $K\to\infty$.  
\end{lemma}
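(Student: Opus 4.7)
The lemma is essentially an elementary statement about equidistribution of the sequence $(\beta k + \eta) \mod 1$, so I will aim for a short direct argument that sidesteps any appeal to Weyl's equidistribution theorem. The key observation is that consecutive differences $b_{k+1} - b_k$ are constrained to a two-element set by an integrality argument, and then a telescoping sum pins down the asymptotic fraction exactly.

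\emph{Step 1 (only two possible values).} Write $y_k = \beta k + \eta$ and $\varepsilon_k = b_k - y_k = \lceil y_k \rceil - y_k$, so that $\varepsilon_k \in [0,1)$. Then
\begin{equation*}
b_{k+1} - b_k = (y_{k+1} - y_k) + (\varepsilon_{k+1} - \varepsilon_k) = \beta + (\varepsilon_{k+1} - \varepsilon_k),
\end{equation*}
with $\varepsilon_{k+1} - \varepsilon_k \in (-1, 1)$. Since $b_{k+1} - b_k$ is an integer lying in $(\beta - 1, \beta + 1)$, it must equal $\lfloor \beta \rfloor$ or $\lceil \beta \rceil$. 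This proves the first claim. The case $\beta \in \N$ is degenerate: both floor and ceiling equal $\beta$ and $\{\beta\} = 0$, so the second claim is vacuous and we assume $\{\beta\} \in (0,1)$ in what follows.

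\emph{Step 2 (counting via telescoping).} Let $m_K$ be the number of indices $k \in \{0, \ldots, K-1\}$ for which $b_{k+1} - b_k = \lceil \beta \rceil$. Since the remaining $K - m_K$ indices contribute $\lfloor \beta \rfloor$, the telescoping identity
\begin{equation*}
b_K - b_0 = \sum_{k=0}^{K-1} (b_{k+1} - b_k) = m_K \lceil \beta \rceil + (K - m_K) \lfloor \beta \rfloor = K \lfloor \beta \rfloor + m_K
\end{equation*}
yields the exact formula $m_K = b_K - b_0 - K \lfloor \beta \rfloor = \lceil \beta K + \eta \rceil - \lceil \eta \rceil - K \lfloor \beta \rfloor$.

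\emph{Step 3 (extracting the limit).} Using $\lceil \beta K + \eta \rceil = \beta K + \eta + O(1)$ and $\lceil \eta \rceil = O(1)$, we obtain
\begin{equation*}
\frac{m_K}{K} = \beta - \lfloor \beta \rfloor + O\!\left(\frac{1}{K}\right) = \{\beta\} + O\!\left(\frac{1}{K}\right),
\end{equation*}
so $m_K / K \to \{\beta\}$, and correspondingly $(K - m_K)/K \to 1 - \{\beta\}$, as claimed.

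\emph{Main obstacle.} There is essentially no obstacle: the lemma is elementary. The only point that requires a moment of care is the exact handling of the $O(1)$ error terms coming from the ceiling functions, which must be controlled uniformly in $\eta$ to make the conclusion genuinely asymptotic in $K$; the telescoping identity above makes this automatic, since the difference $\lceil \beta K + \eta \rceil - \lceil \eta \rceil$ differs from $\beta K$ by at most $1$ regardless of $\eta$.
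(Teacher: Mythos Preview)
Your proof is correct and follows essentially the same approach as the paper: both use the telescoping identity $b_K - b_0 = \sum_k (b_{k+1}-b_k)$ together with $b_K - b_0 = \beta K + O(1)$ to extract the limiting proportions. Your Step~1 is slightly more explicit than the paper (which simply asserts the two-value claim as ``clear''), but otherwise the arguments are identical.
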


The sequence $(b_k)$ in the lemma corresponds to a particular case of the sequence $\ba^{N,K}$: when the function $\alpha$ is linear. In this case $\beta = \frac{N}{K} \dot{\alpha}_{t} \approx \bar{s} \dot{\alpha}_{t}$.

\begin{proof}
If $\beta$ is an integer the result is immediate as $b_{k+1} - b_k = \beta$ for all $k$. If $\beta$ is not an integer, it is clear that the sequence $(b_{k+1} - b_k)_{k \geq 0}$ can only take the values $\lfloor \beta \rfloor$ and $\lceil \beta \rceil$. Calling $d_{k}$ the number of times it takes the value $\lfloor \beta \rfloor$ for $k=0, \ldots, K-1$, note that
\begin{equation*}
b_K - b_0 = \sum_{k =0}^{K-1} (b_{k+1} - b_k) = d_K \lfloor \beta \rfloor + (K-d_K) \lceil \beta \rceil. 
\end{equation*}
Dividing by $K$ and taking $K \to + \infty$, the left hand side converges to $\beta$ so that 
\begin{equation*}
\beta = \lim_{K \to + \infty} \frac{d_K}{K} ( \lfloor \beta \rfloor - \lceil \beta \rceil) + \lceil \beta \rceil = \lceil \beta \rceil - \lim_{K \to + \infty} \frac{d_K}{K} 
\end{equation*}
which gives us the result $\lim_{K} \frac{d_K}{K} = \lceil \beta \rceil - \beta =  1 - \{ \beta \}$.
\end{proof}

Next we want to extend the reasoning of the lemma when the function $\alpha$ is no longer linear. For this we rely on measure theory and refer for instance to \cite{Billingsley} for the concepts and results of measure theory we will need. 

We define $\gamma^{N,K}$ a measure on $[0,1] \times \R_+$ to capture the distributions of $a^{N,K}_{k+1} - a^{N,K}_{k}$:
\begin{equation*}
\gamma^{N,K} = \frac{1}{K} \sum_{k=0}^{K-1}  \delta_{\left(\frac{k}{K}, \, a^{N,K}_{k+1} - a^{N,K}_k \right)}.
\end{equation*}
Here $\delta_{(t,p)}$ is the Dirac mass at $(t,p) \in [0,1] \times \R_+$.
By definition, if $\chi(t,p)$ is a function of two variables,
\begin{equation*}
\frac{1}{K} \sum_{k=0}^{K-1} \chi \left( \frac{k}{K}, a^{N,K}_{k+1} - a^{N,K}_{k} \right) = \iint_{[0,1] \times \R_+} \chi(t,p) \, \rmd \gamma^{N,K}(t,p).
\end{equation*}
By finding the limit of the measure $\gamma^{N,K}$ we can find the limit of the left hand side for any continuous function $\chi$, in particular prove the limit in~\eqref{eq:to_prove}. 

The measure $\gamma^{N,K}$ is a probability measure, and by the bound~\eqref{eq:bound_s_k_apriori} it is supported on compact set independent on $K,N$. Thus \cite[Thm.\ 23.9]{Billingsley}, up to extraction it converges weakly to a limit measure $\gamma$ as $N,K \to + \infty$. As the first marginal of $\gamma^{N,K}$ is $\frac{1}{K} \sum_{k=0}^{K-1} \delta_{k/K}$, we see that the first marginal of $\gamma$ is necessarily the Lebesgue measure on $[0,1]$. Moreover as $\gamma^{N,K}$ is supported on the closed set $[0,1] \times \N$, so does any of its accumulation point. We disintegrate (that is, consider the conditional distribution \cite[Thm.\ 33.3]{Billingsley}) the limit $\gamma$ with respect to its first marginal (the Lebesgue measure), obtaining a family $(\gamma_t)_{t \in [0,1]}$ of probability distributions on $\N$. We write them $\gamma_t = \sum_n d_n(t) \delta_n$, with the weights $d_n(t)$ which may depend on $t$. We obtain that the limit $\gamma$ reads 
\begin{equation*}
\gamma = \int_{0}^1 \left( \sum_{n \in \N} d_{n}(t) \delta_{(t,n)} \right) \rmd t, \qquad \text{with} \qquad \sum_{n \in \N} d_n(t) =1 \text{ for a.e. } t.
\end{equation*}
We then proceed to link $d_{n}(t)$ to $\dot{\alpha}_t$.

Take $t$ such that $\bar{s} \dot{\alpha}_t$ is not an integer. By continuity of $\dot{\alpha}_t$, we see that if $k/K$ is close enough to $t$ then $N ( \alpha_{(k+1)/K} - \alpha_{k/K}) \approx \frac{N}{K} \dot{\alpha}_t \approx \bar{s} \dot{\alpha}_t$ is not an integer. Thus given~\eqref{eq:link_Delta_a_alpha_dot} we deduce that $a^{N,K}_{k+1}-a^{N,K}_{k} \in \{ \lfloor \bar{s} \dot{\alpha}_t \rfloor, \lceil \bar{s} \dot{\alpha}_t \rceil \}$ for $N,K$ large enough. That is, the measure $\gamma^{N,K}$ is supported on $[0,1] \times \{ \lfloor \bar{s} \dot{\alpha}_t \rfloor, \lceil \bar{s} \dot{\alpha}_t \rceil \}$ in a neighbourhood of $\{ t \} \times \R_+$. Passing to the limit $N,K \to + \infty$, the same holds for $\gamma$, so that $d_n(t) = 0$ if $n \notin \{ \lfloor \bar{s} \dot{\alpha}_t \rfloor, \lceil \bar{s} \dot{\alpha}_t \rceil \}$. The unit mass condition gives $d_{\lceil \bar{s} \dot{\alpha}_t \rceil}(t) = 1 -d_{\lfloor \bar{s} \dot{\alpha}_t \rfloor}(t)$.

On the other hand take $t$ such that $\bar{s} \dot{\alpha}_t$ is an integer. As we have done the assumption that $\dot{\alpha}$ has a finite number of points of maximum and minimum, up to excluding a finite number of $t$ such that $\bar{s} \dot{\alpha}_t$ is an integer (they make a set of Lebesgue measure $0$), we have that $\dot{\alpha}_t$ is constant in a neighbourhood of $t$. We call $I$ this neighbourhood. Thus for $k/K \in I$, we have $N (\alpha_{(k+1)/K} - \alpha_{k/K}) = \frac{N}{K} \dot{\alpha}_t$. We deduce from Lemma~\ref{lm:proportion_linear} that $a^{N,K}_{k+1}-a^{N,K}_{k} = \bar{s} \dot{\alpha}_t$ for a proportion $ |\frac{N}{K} - \bar{s}| \dot{\alpha}_t$ of the indices $k$ with $k/K \in I$. Passing to the limit $N,K \to + \infty$, we have $d_n(t) = 0$ if $n \neq \bar{s} \dot{\alpha}_t$ for all $t \in I$. 

Putting these two estimates together, we obtain a refined description of the structure of $\gamma$: calling $d(t) = d_{\lfloor \bar{s} \dot{\alpha}_t \rfloor}(t)$ which is a measurable function from $[0,1]$ to $[0,1]$,
\begin{equation*}
\gamma = \int_0^1 \left(  d(t) \delta_{(t,\lfloor \bar{s} \dot{\alpha}_t \rfloor)} + (1- d(t)) \delta_{(t,\lceil \bar{s} \dot{\alpha}_t \rceil)} \right) \rmd t.
\end{equation*}
We have narrowed down the support, it remains to identify the coefficient $d(t)$. Similarly to the proof of Lemma~\ref{lm:proportion_linear}, we use the property that the sum of $a^{N,K}_{k+1} - a^{N,K}_k$ gives us back the original sequence $a^{N,K}_k$. If $t_0 \leq t_1$, by definition of $\gamma^{N,K}$, 
\begin{equation*}
\iint_{[t_0,t_1] \times \R_+} p \, \rmd \gamma^{N,K}(t,p) = \frac{1}{K} \sum_{k = \lceil t_0 K \rceil}^{\lfloor t_1 K \rfloor}  \left( a^{N,K}_{k+1} - a^{N,K}_k \right) \to \bar{s} (\alpha_{t_1} - \alpha_{t_0}).  
\end{equation*}
On other hand, as the measure $\gamma^{N,K}$ converges to $\gamma$ weakly and that the boundary of the set $[t_0,t_1] \times \R_+$ has zero measure for $\gamma$ (because the first marginal of $\gamma$ is the Lebesgue measure) we have \cite[Thm.\ 29.2]{Billingsley} 
\begin{equation*}
\iint_{[t_0,t_1] \times \R_+} p \, \rmd \gamma^{N,K}(t,p) \to \iint_{[t_0,t_1] \times \R_+} p \, \rmd \gamma(t,p) = \int_{t_0}^{t_1} \left( d(t) \lfloor \bar{s} \dot{\alpha}_t \rfloor + (1-d(t)) \lceil \bar{s} \dot{\alpha}_t \rceil \right) \rmd t.
\end{equation*}
Dividing by $t_1-t_0$ and taking the limit $t_1 \to t_0$, by the Lebesgue differentiation theorem \cite[Thm.\ 31.3]{Billingsley} we obtain for a.e. $t_0$ the identity
\begin{equation*}
\bar{s} \dot{\alpha}_{t_0} = d({t_0}) \lfloor \bar{s} \dot{\alpha}_{t_0} \rfloor + (1-d(t_0)) \lceil \bar{s} \dot{\alpha}_{t_0} \rceil,
\end{equation*}
which gives $d({t_0}) = 1- \{ \bar{s} \dot{\alpha}_{t_0} \}$. Thus we deduce finally
\begin{equation*}
\gamma = \int_{0}^1 \left( (1-\{ \bar{s} \dot{\alpha}_{t} \}) \delta_{(t,\lfloor \bar{s} \dot{\alpha}_t \rfloor)} + \{ \bar{s} \dot{\alpha}_{t} \} \delta_{(t, \lceil \bar{s} \dot{\alpha}_{t} \rceil)} \right) \rmd t.
\end{equation*}
Recall that we started the analysis by taking $\gamma$ a limit of a subsequence of $(\gamma^{N,K})$. As the expression that we find for $\gamma$ does not depend on the subsequence, we deduce that $\gamma^{N,K}$ actually converges to $\gamma$ as $N,K \to + \infty$ and $N/K \to \bar s$.

Eventually we can conclude: by weak convergence if $\phi, \psi$ are two continuous functions
\begin{align*}
\frac{1}{K} \sum_{k=1}^{K} \phi\left( \frac{k}{K} \right) \psi\left( a^{N,K}_{k+1}-a^{N,K}_k  \right) & = \iint_{[0,1] \times \R_+} \phi(t) \psi(p) \, \rmd \gamma^{N,K}(t,p) \\
& \to \iint_{[0,1] \times \R_+} \phi(t) \psi(p) \, \rmd \gamma(t,p) \\
& = \int_0^1 \phi(t) \left( (1-\{ \bar{s} \dot{\alpha}_{t} \}) \psi(\lfloor \bar{s} \dot{\alpha}_t \rfloor) +  \{ \bar{s} \dot{\alpha}_{t} \} \psi(\lceil \bar{s} \dot{\alpha}_{t} \rceil) \right) \rmd t.
\end{align*}
We apply this result for $\phi(t) = g(\alpha_t)$ and $\psi(p) = p^2$. We obtain the limit~\eqref{eq:to_prove} we need.

\end{document}